\documentclass{article}

\usepackage{microtype}
\usepackage{graphicx}
\usepackage{subfigure}
\usepackage{booktabs} % for professional tables
\usepackage{tikz}
\usetikzlibrary{arrows.meta, positioning, fit, calc}
\usepackage{algorithm}
\usepackage{algorithmic}
\usepackage{hyperref}

\usepackage[preprint]{icml2026}

\usepackage{amsmath}
\usepackage{amssymb}
\usepackage{mathtools}
\usepackage{amsthm}
\usepackage{multirow}
\usepackage[capitalize,noabbrev]{cleveref}

\theoremstyle{plain}
\newtheorem{theorem}{Theorem}%[section]
\newtheorem{proposition}[theorem]{Proposition}

\theoremstyle{definition}
\newtheorem{definition}[theorem]{Definition}
\newtheorem{assumption}[theorem]{Assumption}
\theoremstyle{remark}
\newtheorem{remark}[theorem]{Remark}

\usepackage[textsize=tiny]{todonotes}

\icmltitlerunning{Bayesian Meta-Learning with Expert Feedback for Task-Shift Adaptation through Causal Embeddings}

\begin{document}

\twocolumn[
\icmltitle{Bayesian Meta-Learning with Expert Feedback for Task-Shift Adaptation through Causal Embeddings}

  % It is OKAY to include author information, even for blind submissions: the
  % style file will automatically remove it for you unless you've provided
  % the [accepted] option to the icml2026 package.

  % List of affiliations: The first argument should be a (short) identifier you
  % will use later to specify author affiliations Academic affiliations
  % should list Department, University, City, Region, Country Industry
  % affiliations should list Company, City, Region, Country

  % You can specify symbols, otherwise they are numbered in order. Ideally, you
  % should not use this facility. Affiliations will be numbered in order of
  % appearance and this is the preferred way.
\icmlsetsymbol{equal}{*}

\begin{icmlauthorlist}
\icmlauthor{Lotta Mäkinen}{aff1}
\icmlauthor{Jorge Loría}{aff1}
\icmlauthor{Samuel Kaski}{aff1,aff2,aff3}
%\icmlauthor{Firstname4 Lastname4}{sch}
%\icmlauthor{Firstname5 Lastname5}{yyy}
%\icmlauthor{Firstname6 Lastname6}{sch,yyy,comp}
%\icmlauthor{Firstname7 Lastname7}{comp}
%\icmlauthor{}{sch}
%\icmlauthor{Firstname8 Lastname8}{sch}
%\icmlauthor{Firstname8 Lastname8}{yyy,comp}
%\icmlauthor{}{sch}
%\icmlauthor{}{sch}
\end{icmlauthorlist}

\icmlaffiliation{aff1}{Department of Computer Science, Aalto University, Espoo, Finland}
\icmlaffiliation{aff2}{ELLIS Institute Finland, Espoo, Finland}
\icmlaffiliation{aff3}{Department of Computer Science, University of Manchester, United Kingdom}

\icmlcorrespondingauthor{Lotta M\"akinen}{lotta.makinen@aalto.fi}
\icmlcorrespondingauthor{Jorge Lor\'ia}{jorge.loria@aalto.fi}

% You may provide any keywords that you
% find helpful for describing your paper; these are used to populate
% the "keywords" metadata in the PDF but will not be shown in the document
\icmlkeywords{Bayesian meta-learning, transfer learning, expert knowledge elicitation, structural causal models}

\vskip 0.3in
]

\newcommand{\JL}[1]{\textcolor{purple}{\textsf{[JL: #1]}}}
\newcommand{\LM}[1]{\textcolor{blue}{\textsf{[LM: #1]}}}

% this must go after the closing bracket ] following \twocolumn[ ...

% This command actually creates the footnote in the first column listing the
% affiliations and the copyright notice. The command takes one argument, which
% is text to display at the start of the footnote. The \icmlEqualContribution
% command is standard text for equal contribution. Remove it (just {}) if you
% do not need this facility.

% Use ONE of the following lines. DO NOT remove the command.
% If you have no special notice, KEEP empty braces:
\printAffiliationsAndNotice{}  % no special notice (required even if empty)
% Or, if applicable, use the standard equal contribution text:
% \printAffiliationsAndNotice{\icmlEqualContribution}

\begin{abstract}
Meta-learning methods perform well on new within-distribution tasks but often fail when adapting to out-of-distribution target tasks, where transfer from source tasks can induce negative transfer. We propose a causally-aware Bayesian meta-learning method, by conditioning task-specific priors on precomputed latent causal task embeddings, enabling transfer based on mechanistic similarity rather than spurious correlations. Our approach explicitly considers realistic deployment settings where access to target-task data is limited, and adaptation relies on noisy (expert-provided) pairwise judgments of causal similarity between source and target tasks. We provide a theoretical analysis showing that conditioning on causal embeddings controls prior mismatch and mitigates negative transfer under task shift. Empirically, we demonstrate reductions in negative transfer and improved out-of-distribution adaptation in both controlled simulations and a large-scale real-world clinical prediction setting for cross-disease transfer, where causal embeddings align with underlying clinical mechanisms. 
\end{abstract}

\section{Introduction}
\label{sec:introduction}
A fundamental challenge in transfer learning and meta-learning is adapting to new tasks whose data-generating mechanisms differ from those encountered during training. In meta-learning, each task has its own conditional distribution, and \textbf{task-level} distribution shift occurs when the target task is generated by a different underlying mechanism from the source tasks. This shift can severely degrade predictive performance \citep{quinonero2022dataset}, making this issue critical in many domains such as healthcare. Clinically similar manifestations across patients and diseases may arise from distinct biological mechanisms \citep{subbaswamy2020development}: for example, type 1 and 2 diabetes share overlapping symptoms but arise from different mechanisms, namely impaired insulin \textit{production} versus impaired insulin \textit{response}. Hence, models trained across tasks may transfer spurious correlations rather than causal mechanisms, leading to worse performance when deployed on new tasks \citep{wang2019characterizing}. 

On the other hand, under appropriate assumptions, \textbf{causal relationships} remain invariant across environments and are therefore stable under distribution shift \citep{arjovsky2019invariant, pearl2009causality}. If the causal mechanism of each task were known, identifying which source tasks are relevant for a new target task would be straightforward, since tasks with similar mechanisms would be expected to generalize well to one another. In practice, however, the causal mechanisms are \textbf{not} known, and existing causal discovery and inference methods typically require shared feature spaces or joint access to data across tasks \citep{peters2016causal, lorchDiBSDifferentiableBayesian2021a}. The problem is that when source and target tasks originate from different datasets, it is \textit{impossible} to assess the causal similarity between tasks from observational data alone. This situation is prevalent in healthcare settings due to privacy concerns and strict laws.
% This setting is common in healthcare due to privacy and data-sharing constraints. 

Reasoning about causal relationships is a core part of clinical practice. Through differential diagnosis, clinicians routinely compare new patient cases to previously seen ones, reasoning about which underlying mechanisms best explain the observed presentation \citep{eva2005every, pelaccia2011analysis}. Such expert knowledge provides exactly the causal information that is \textit{unidentifiable} from data alone. Despite this, no prior work has leveraged expert judgments about causal similarity to align target and source tasks. 

Although \textbf{meta-learning} is a powerful framework for fast adaptation to new tasks using only a few samples, it still struggles when faced with out-of-distribution tasks \citep{hospedales2021meta}. Most meta-learning approaches optimize a single shared initialization or prior across source tasks, which is then used to adapt to a new target task. As a result, the meta-learner cannot distinguish the source tasks that are most relevant for a given target task, and the shared prior may bias adaptation in a wrong direction when the target task lies outside the source-task distribution, leading to negative transfer. Meta-learning methods that incorporate task similarity, rely on ``closeness'' in the model parameters or learned features, without capturing causal structures \citep{yao2019hierarchically, zhou2021task}. %%% do we know for sure they are unstable? -- JL

Our approach leverages causal task structures to guide transfer under task-level distribution shift. During meta-training, each source task is embedded into a latent causal task embedding space designed to capture stable mechanistic relationships between tasks. These causal task embeddings parameterize the task-specific prior of a Bayesian meta-learner, modulating transfer by the causal relationships between tasks rather than (possibly spurious!) correlations. % both during \textbf{training} and during \textbf{adaptation} to new tasks. 

At deployment, when the embedding of a new target task is unknown, we infer its position in the causal embedding space using pairwise expert task similarity judgments between the target and source tasks. We assume access to a domain expert that provides (noisy) pairwise judgments about the relative causal similarity between tasks, without access to task observations. The inferred target embedding then defines a task-adaptive prior that emphasizes causally aligned source tasks, mitigating negative transfer under distribution shift. We demonstrate the effectiveness of this approach through theoretical analysis and experiments on both synthetic benchmarks and in a real-world clinical cross-disease prediction setting. 

%% In Section~\ref{sec:background} we provide preliminaries, and discuss related work in Section~\ref{sec:related-work}. 
Our main contributions are: \vspace{-0.4cm}
\begin{itemize}
    \item A causally-aware Bayesian meta-learning method that conditions task-specific priors on a causal task embedding space and uses a Bayesian expert preference model over pairwise similarity queries to infer the target-task embedding without requiring access to target-task data, described in Sections ~\ref{sec:meta-learning} and~\ref{sec:expert-inference}.
    \item A theoretical analysis characterizing negative transfer in terms of prior mismatch under task shift, presented in Section~\ref{sec:error-decomposition}.
    \item Numerical experiments on large-scale medical data showing that our approach improves adaptation and reduces negative transfer under distribution shift, compared to alternative methods, shown in Section~\ref{sec:experiments}.
    %, in both synthetic settings and large-scale medical data. %%%% removing the substantially for now
\end{itemize}
% We conclude in Section~\ref{sec:conclusion} by pointing out future directions that stem from our work.
\section{Preliminaries}\label{sec:background}
\subsection{Bayesian Meta-Learning}\label{sec:background-metalearning}
We build on the amortized hierarchical Bayesian meta-learning formulation \citep{ravi_amortized_2018}, which we summarize in this section. % The meta-learning is formalized as a hierarchical Bayesian model over different tasks $t\in\mathcal{T}$. 
The generative model assumes shared global parameters of a neural network and task-specific parameters in the same space. The global parameters are denoted by $\theta \in \mathbb{R}^p$, and the task-specific parameters by $\phi_t \in \mathbb{R}^p$, for $t\in\mathcal{T}$; following the hierarchical structure
\begin{align}
\theta &\sim p(\theta), \label{eq:hyperprior}\\
\phi_t\mid \theta &\overset{i.i.d.}{\sim} \mathcal{N}(\theta,\sigma^2 I_p), \text{ for } t\in \mathcal{T} \text{, and} \label{eq:task-prior}\\
\mathbf{y}_{t,i} \mid \mathbf{x}_{t,i},\phi_t &\overset{indep.}{\sim} p(\mathbf{y}_{t,i} \mid \phi_t,\mathbf{x}_{t,i}),\label{eq:likelihood}
\end{align}
for $i=1,\dots,M_t$.
% Here $\theta, \phi_t$ correspond to neural network parameters in a shared parameter space. 
The task-prior $\phi_t \mid \theta$ ensures that the task-specific parameters remain close to the global parameters $\theta$. In this paper we follow the common choice of independent Gaussian random variables, but this is easily adapted to other general priors. The likelihood for task $t$ is denoted by $p(\mathbf{y}\mid \phi_t,\mathbf{x})$, with parameters encoded by $\phi_t$ (e.g., the mean and variance are functions of $\phi_t$, and $\mathbf{x}$). Each task dataset is given by $\mathcal{D}_t = \{(\mathbf{x}_{t,i},\mathbf{y}_{t,i})\}_{i=1}^{M_t}$ and is 
%$\phi_t \mid \theta \sim \mathcal{N}(\theta, \sigma^2I_p)$, which centers the task-specific parameters around $\theta$. Equivalently, this prior can be written as
%\[
%\phi_t = \theta + \epsilon, \qquad \epsilon \sim \mathcal{N}(0, \sigma^2 I_p).
%\] 
% Each task dataset $\mathcal{D}_t$ is 
further divided into a support set $\mathcal{D}_t^{(s)}$ and a query set $\mathcal{D}_t^{(q)}$, which simulates train-test episodes inside the meta-training. Amortized variational inference is used, following \citet{ravi_amortized_2018}, to learn a global variational distribution $q_{\lambda}(\theta)$ and task-level variational distributions $q_{\psi_t}(\phi_t \mid \mathcal{D}_t^{(s)})$, where the $\lambda$ are the variational global parameters and the $\psi_t$ the variational task-specific parameters of the task posterior. 

Typical approaches follow bi-level optimization objective with nested inner and outer loss functions. The inner task-level loss function is 
\begin{align}\label{eq:inner-loss}
    \mathcal{L}_1(\psi_t, \mathcal{D}_t^{(s)}) 
&:= 
- \mathbb{E}_{q_{\psi_t}(\phi_t \mid \mathcal{D}_t^{(s)})} 
\Big[ \log p(\mathbf{y}_t^{(s)} \mid \mathbf{x}_t^{(s)}, \phi_t) \Big] \nonumber \\
&\quad + D_{\mathrm{KL}}\Big(q_{\psi_t}(\phi_t \mid \mathcal{D}_t^{(s)}) \;\|\; p(\phi_t \mid \theta)\Big),
\end{align}
with outer global loss function given by 
\begin{align}\label{eq:outer-loss}
    \mathcal{L}_2(\psi_t, \mathcal{D}_t^{(q)}, \lambda) 
&:= 
- \mathbb{E}_{q_{\psi_t}(\phi_t \mid \mathcal{D}_t^{(s)})} 
\Big[ \log p(\mathbf{y}_t^{(q)} \mid \mathbf{x}_t^{(q)}, \phi_t) \Big] \nonumber \\
&\quad + D_{\mathrm{KL}}\Big(q_\lambda(\theta) \;\|\; p(\theta)\Big).
\end{align}
The bi-level problem has an outer optimization of
\begin{equation}
    \min_{\lambda} \mathbb{E}_{t \sim p(\mathcal{T})} \left[ \mathcal{L}_2\left(\psi_t^*, \mathcal{D}_t^{(q)}, \lambda\right) \right], 
\end{equation}
where $\psi_t^* = \arg\min_{\psi_t} \mathcal{L}_1(\psi_t, \mathcal{D}_t^{(s)})$ is a solution of the inner optimization.

During meta-training a shared prior $p(\phi_t \mid \theta)$ across tasks is used to learn a posterior distribution of the parameters for each task, and at meta-test time, the model is presented with a new target task $t'$. The task-specific parameters are adapted by minimizing $\mathcal{L}_1(\psi_{t'}, \mathcal{D}_{t'}^{(s)})$ using the shared prior. Predictions are made on held-out test data. A key limitation of this formulation is that when tasks differ in their data-generating mechanisms, enforcing a single global prior for all possible target tasks can lead to worse performance on out-of-distribution target tasks, leading to negative transfer \citep{wang2019characterizing}. In Section \ref{sec:meta-learning} we extend this framework by conditioning the prior $p(\phi_t \mid \theta)$ on causal task embeddings, allowing the prior to vary across tasks. Furthermore, in Section~\ref{sec:error-decomposition} we explore how this affects transfer in traditional methods and how our approach is better.

\subsection{Structural Causal Models}\label{sec:background-causal}
We follow the structural causal model (SCM) notation \citep{pearl2009causality}. A task-specific SCM is a tuple $\mathcal{W}=(\mathbf{V}, \mathbf{U}, F, P_U)$, where $\mathbf{V}=\{X_1,\ldots,X_d\}$ are the endogenous variables, $\mathbf{U}=\{U_1,\ldots,U_d\}$ are exogenous noise variables, $F=\{f_1,\ldots,f_d\}$ is the set of structural equations $X_i = f_i(\mathrm{Pa}(X_i),U_i)$, with $\mathrm{Pa}(X_i) \subseteq \mathbf{V}$ denoting the causal parents of $X_i$, and $P_U$ is the joint distribution over the noise variables. An SCM induces a directed acyclic graph (DAG) where each variable in $\mathbf{V}$ is a vertex and edges encode parent-child relations defined by $\mathrm{Pa}(X_i)$.

\section{Related Work}\label{sec:related-work}
\paragraph{Meta-learning under task-level distribution shift.} 

Meta-learning provides a framework for transferring inductive bias across multiple tasks by learning shared model parameters or priors. Existing approaches include gradient-based methods, such as model agnostic meta-learning \citep{finn2017model}, and Bayesian meta-learning methods, which model tasks through a shared global prior over task-specific parameters and offer a principled way to reason about uncertainty \citep{grant2018recasting, finn2018probabilistic, yoon2018bayesian, ravi_amortized_2018}. However, most meta-learning methods implicitly assume that source and target tasks are identically and independently drawn from a common task distribution. When this assumption is violated, the learned inductive bias becomes misspecified and transfer can even lead to negative transfer \citep{wang2019characterizing}. Recent theoretical works formalize this by showing that meta-generalization error depends explicitly on a divergence between source and target task distributions: implying that performance degrades when target tasks move farther from source tasks \citep{fallah2021generalization, jose2021information, chen2021generalization}. In order to mitigate heterogeneity among tasks, several approaches cluster source tasks or learn multiple task-specific priors or initializations \citep{zhou2021task, yao2019hierarchically}. Other works embed tasks into latent spaces derived from data and compute task similarities to guide transfer \citep{achille2019task2vec, chen2020mate}. However, these approaches define the task similarity using observed features or model parameters, which are unstable under distribution shift. More recent work has explored modeling similarity between source tasks at the level of causal mechanisms to group tasks during training to improve generalization within the same task distribution \citep{wharrie_bayesian_2024}. Crucially, none of these methods address how to align target and source tasks and infer the causal relationships between them, particularly when no target task data is available. 

\paragraph{Causal inference and task heterogeneity.}
From a causal perspective, task heterogeneity arises from differences in the underlying data-generating processes \citep{pearl2009causality}. If the task-specific structural causal models (SCMs) are fully known, tasks could be compared directly by differences in their directed acyclic graphs (DAGs) or structural equations. Recent Bayesian causal discovery methods, such as DiBS \citep{lorchDiBSDifferentiableBayesian2021a}, aim to infer full posterior distributions over causal graph structures from data. While powerful, inferring complete SCMs is often impractical in real-world settings, particularly in high-dimensional problems or in the presence of latent confounding. Instead of full causal discovery, many causal inference (CI) approaches estimate task-specific causal effects. Methods such as instrumental variable estimation \citep{angrist1996identification} and invariant causal prediction \citep{peters2016causal} produce estimates of the direction and magnitude of causal relationships, which can be summarized as causal representation vectors \citep{scholkopf2021toward}. These representations can serve as proxies for the underlying causal mechanisms, enabling task comparison even when full SCM discovery is impossible. However, existing CI methods require access to task-level observational data, and cannot be applied to assess similarity between disjoint source and target tasks using data alone. 
\paragraph{Expert knowledge in learning systems.}
Incorporating expert knowledge into learning systems has been studied in a variety of settings, particularly through preference learning and pairwise comparison methods. Ranking-based models are a classical way of eliciting relative information from experts, formalized by \citet{thurstone_law_1927} and \citet{bradley_rank_1952}. Preference learning extends these models by placing a probabilistic model over latent utilities or similarities and modeling expert feedback as noisy but consistent relative judgments \citep{chu2005preference}. These models have been naturally connected to active learning for selecting the most informative queries \citep{houlsby2011bayesian}. Expert knowledge has also been incorporated into CI, particularly in the context of learning causal structures. Prior work has explored learning causal Bayesian networks with expert elicitation \citep{constantinou2016integrating}. Recently, \citet{bjorkman2025incorporatingexpertknowledgebayesian} proposed a Bayesian experimental design approach to elicit expert knowledge for discovering mixtures of DAGs in heterogeneous domains. However, these approaches identify causal structure \textit{within} a dataset, rather than aligning a new task to previously learned tasks when target data is unavailable. In meta-learning, incorporating expert knowledge remains an open challenge highlighted by recent reviews \citep{vettoruzzo2024advances}. In particular, existing meta-learning methods do not leverage expert feedback to align target tasks with source tasks at adaptation time. In contrast, our approach uses expert pairwise similarity judgments to align a (previously unseen) task with source tasks based on the causal mechanisms, enabling transfer under task-level shifts. % in a meta-learning context.

\section{Incorporating Causal Embeddings to a Bayesian Meta-Learning Model}\label{sec:meta-learning}
\subsection{Causal Embeddings}
Assume access to latent \emph{causal embeddings} $z\in \mathcal{Z}\subseteq \mathbb{R}^d$, equipped with a metric $d_{\mathcal{Z}}: \mathcal{Z} \times \mathcal{Z} \rightarrow \mathbb{R}_{\geq 0}$. We use the Euclidean metric for simplicity, although our framework is agnostic to the choice of metric.
\begin{definition}% [Causal task embedding]
For each task $t \in \mathcal{T}$, we define a causal task embedding $z_t \in \mathcal{Z}$, a low-dimensional vector representation summarizing the causal mechanistic structure of the task.
\end{definition}
The embedding $z_t$ is constructed by mapping the vector representation of the underlying SCM. To construct said vector representation, we use invariant causal prediction \citep[ICP;][]{peters2016causal} and an instrumental variables approach with Mendelian randomization \citep[MR;][]{didelez2007mendelian}, both described in App.~\ref{sec:causal_methods}. % We also explore inferring causal embeddings from expert feedback, and the expert inference model is described in Section \ref{sec:expert-inference}. %%% I don't think we need to say this yet.

Task embeddings are constructed to summarize causal mechanisms. As such, distances in $\mathcal{Z}$ can be interpreted as causal relatedness. We introduce the following notion of proximity in the embedding space.
\begin{definition}%[$\varepsilon$-similar tasks]
\label{def:eps-similarity}
    Two tasks $t$ and $t'$ are $\varepsilon$-similar if 
    %\[
    $d_{\mathcal{Z}}(z_t, z_{t'}) \le \varepsilon,$
    %\]
    where $d_{\mathcal{Z}}$ is the metric on $\mathcal{Z}$.
\end{definition}
For a specific task $t\in \mathcal{T}$, the true outcome generating distribution is denoted by $p_t(y\mid \mathbf{x})$. The following key assumption helps link causal structure to predictive behavior.
\begin{assumption}% [Causal--predictive alignment]
\label{as:smoothness}
 Given two $\varepsilon$-similar tasks $t,t'$, their predictive conditional distributions $p_{t}(y \mid \mathbf{x})$ and $p_{t'}(y \mid \mathbf{x})$ are at a distance of $\delta_\varepsilon$. 
\end{assumption}
Assumption~\ref{as:smoothness} is a smoothness condition on $p_t(y\mid \mathbf{x})$ with respect to the latent embeddings $z_t$. Formalizing the idea that similarity in the underlying causal mechanisms can be translated into similarity in the predictive behavior. 

In the rest of the article, we use the task embeddings $z_t$ to condition the task-specific prior, such that the geometry of $\mathcal{Z}$ implicitly modulates information transfer across tasks. 
\subsection{Causally-Aware Bayesian Meta-Learning}
% \JL{(to self): removed this paragraph, need to ensure we are not missing anything. 
%  Our focus is on binary classification tasks, where the goal is to predict $y\in \{0,1\}$ given an input~$\mathbf{x}$.
% }
We extend the Bayesian meta-learning framework from Section~\ref{sec:background-metalearning} by conditioning the task-specific prior on causal embeddings. Eq.~\eqref{eq:task-prior}, the standard task-specific prior, centers all tasks around the same global parameter $\theta$. We modify it to the embedding-aware prior
\begin{equation}\label{eq:task-prior-new}
    \phi_t \mid z_t, \theta \overset{indep.}{\sim} \mathcal{N}(\theta + Wz_t, \sigma^2 I_p), \text{ for } t\in\mathcal{T},
\end{equation}
where $W \in \mathbb{R}^{p \times d}$ is a learnable weight matrix mapping embeddings to parameter space, and $\sigma^2$ is a prior variance. The prior mean $\theta + Wz_t$ is task-specific through the embedding $z_t$, while the prior variance is shared across all tasks. 

This linear parameterization admits a geometric interpretation (Figure \ref{fig:spaces-illustration}). The global parameter $\theta$ is a starting point in the parameter space $\Phi$, and each task embedding $z_t$ induces a displacement $Wz_t$ within the affine subspace $\{\theta + Wz: z\in \mathcal{Z}\}$, arriving at a task-specific prior mean at $\theta + Wz_t$. Tasks that lie close in the embedding space $\mathcal{Z}$ induce similar prior means; this aligns with Assumption \ref{as:smoothness}. When $z_t=0$ or $W=0$, the prior reduces to $\mathcal{N}(\theta,\sigma^2I)$, the standard meta-learning prior. 
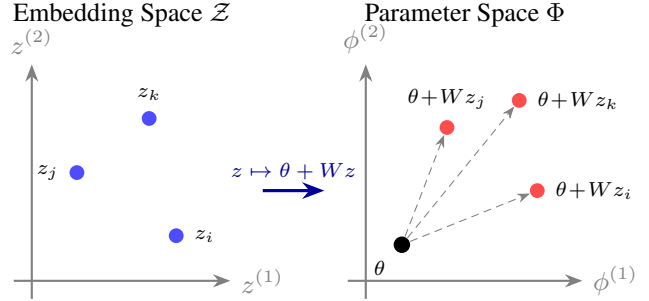
\begin{figure}[!t]
\centering
\begin{tikzpicture}[
    >=Stealth,
    scale=1.2,
    every node/.style={scale=1.1},
    embedding point/.style={circle, fill=blue!70, inner sep=1.8pt},
    prior mean/.style={circle, fill=red!70, inner sep=1.8pt},
    global mean/.style={circle, fill=black, inner sep=2pt},
    axis/.style={->, thick, gray},
    map arrow/.style={->, thick, shorten >=2pt, shorten <=2pt},
    displacement/.style={->, densely dashed, gray},
    font=\small
]
%embedding space, left panel
\begin{scope}[local bounding box=left panel]
    %axes
    \draw[axis] (-0.2,0) -- (2.2,0) node[right] {$z^{(1)}$};
    \draw[axis] (0,-0.2) -- (0,2.4) node[above] {$z^{(2)}$};
    
    %task embeddings
    \node[embedding point, label={[font=\scriptsize]right:$z_i$}] (z1) at (1.6,0.5) {};
    \node[embedding point, label={[font=\scriptsize]left:$z_j$}] (z2) at (0.5,1.2) {};
    \node[embedding point, label={[font=\scriptsize]above:$z_k$}] (z3) at (1.3,1.8) {};
    
    %panel header
    \node[above, font=\small] at (1.0,2.7) {Embedding Space $\mathcal{Z}$};
\end{scope}
%arrow between panels
\draw[map arrow, very thick, blue!60!black] (2.5,1.0) -- (3.3,1.0) 
    node[midway, above, font=\scriptsize] {$z \mapsto \theta + Wz$};
%parameter space, right panel
\begin{scope}[xshift=3.7cm, local bounding box=right panel]
    %axes
    \draw[axis] (-0.2,0) -- (2.4,0) node[right] {$\phi^{(1)}$};
    \draw[axis] (0,-0.2) -- (0,2.4) node[above] {$\phi^{(2)}$};
    
    %theta
    \node[global mean, label={[font=\scriptsize]below left:$\theta$}] (theta) at (0.4,0.4) {};
    
    %phi_t
    \node[prior mean, label={[font=\scriptsize]right:$\theta\!+\!Wz_i$}] (phi1) at (1.9,1.0) {};
    \node[prior mean, label={[font=\scriptsize]above:$\theta\!+\!Wz_j$}] (phi2) at (0.9,1.7) {};
    \node[prior mean, label={[font=\scriptsize]right:$\theta\!+\!Wz_k$}] (phi3) at (1.7,2.0) {};
    
    %arrows from theta to phi_t
    \draw[displacement] (theta) -- (phi1);
    \draw[displacement] (theta) -- (phi2);
    \draw[displacement] (theta) -- (phi3);
    
    %panel header
    \node[above, font=\small] at (1.1,2.7) {Parameter Space $\Phi$};
\end{scope}
\end{tikzpicture}
\caption{Illustration of the causal embedding space (\emph{left}) and the mapping to the parameter space (\emph{right}). Each task $t$ has an embedding $z_t$, encoding its causal structure. The linear map $z \mapsto \theta + Wz$ transforms embeddings into task-specific priors, enabling tasks with similar causal mechanisms to have similar priors.}
\label{fig:spaces-illustration}
\end{figure}

Conditioning the prior mean allows task-specific inductive biases to vary smoothly across tasks as a function of their causal similarity, influencing the direction in which the model adapts during task-specific learning. In contrast, keeping the prior variance global isolates the effect of task similarity on the prior mean, promotes stable adaptation across tasks, and simplifies the theoretical analysis of generalization under task shift. Although more expressive mappings could be considered (e.g., neural networks), linear mappings yield favorable theoretical properties and enable analysis of the transfer between task similarity while remaining sufficient to capture task-level structures in our setting. 

We instantiate the proposed model within the standard amortized Bayesian meta-learning framework, described in Section~\ref{sec:background-metalearning}. Consider a dataset $\mathcal{D}_t = \{(\mathbf{x}_{t,i},{y}_{t,i})\}_{i=1}^{M_t}$ with $M_t$ samples for task $t$ drawn from this distribution, and assume that the tasks are separated into source and target, denoted by $\mathcal{T} = \mathcal{T}_{\text{source}} \cup \mathcal{T}_{\text{target}}$.
During meta-training, the model observes a set of source tasks $\mathcal{T}_{\text{source}}$ sampled from a task distribution $t\sim p(\mathcal{T_{\text{source}}})$. For each source task $t$, we assume a causal task embedding $z_t$ is available, and use it to construct an embedding-conditioned prior $p(\phi_t \mid z_t, \theta)$. Task-specific parameters are adapted on support data $\mathcal{D}_t^{(s)}$ by minimizing the inner variational objective 
\begin{align*}\label{eq:inner-loss2}
    \mathcal{L}_1(\psi_t, \mathcal{D}_t^{(s)}, z_t) 
    &:= - \mathbb{E}_{q_{\psi_t}(\phi_t \mid \mathcal{D}_t^{(s)})} 
    \Big[ \log p({y}_t^{(s)} \mid \mathbf{x}_t^{(s)}, \phi_t) \Big] \nonumber \\
    &\quad + D_{\mathrm{KL}}\Big(q_{\psi_t}(\phi_t \mid \mathcal{D}_t^{(s)}) \;\|\; p(\phi_t \mid z_t, \theta)\Big),
\end{align*}
instead of Eq.~\eqref{eq:inner-loss}.
Conditioning the prior on $z_t$ directly influences the KL regularization term in the inner objective ensuring the posterior does not drift too far from a causally aligned prior. The task predictor is parameterized by a Bayesian neural network, where the task parameters $\phi_t$ correspond to the weights of a task prediction head. Full algorithms and optimization details are provided in App.~\ref{app:algorithms}. %% correcting reference -- JL

At meta-test time, the model is presented with a previously unseen target task $t'$ sampled from its task distribution $t' \sim q(\mathcal{T}_{\text{target}})$. If $p(\mathcal{T_{\text{source}}}) \neq q(\mathcal{T}_{\text{target}})$, task-level distribution-shift occurs and the target task distribution will not be aligned with the source task distribution. The target task embedding $z_{t'}$ is used to construct a task-specific prior $p(\phi_{t'} \mid z_{t'}, \theta)$, which guides adaptation to the target task using its observed support data. For adaptation and prediction in new tasks we use Alg.~\ref{alg:2level_adapt_entmax_clean}. %% making explicit reference to the algorithm so it is not new when  including the full algorithm with adaptation.
% Model performance is evaluated and predictions are made on held-out query data from the same target task.  %%% this reads like it would be in the experiment section.

\section{Expert-Guided Inference of Target Task Embeddings}\label{sec:expert-inference}
In a realistic deployment scenario, the source and target datasets are not simultaneously available. Meaning that the source and target tasks ($t$ and $t'$, respectively) cannot be easily embedded into $\mathcal{Z}$ at training time. We assume that it is possible to embed the source tasks into $\mathcal{Z}$, and that the target task embedding $z_{t'}$ is unknown at deployment time, which is key to make predictions in our model.
% If $t \in \mathcal{T}_{\text{source}}$ and $t' \in \mathcal{T}_{\text{target}}$ originate from the same dataset $\mathcal{D}$, then all tasks can be embedded in the common causal embedding space using causal discovery methods. However, in realistic deployment settings, the model is often applied to a separate target dataset $\mathcal{D}_{\text{target}}$, distinct from the source dataset $\mathcal{D}_{\text{source}}$. Simultaneous access to source and target data is not always possible, due to privacy regulations, lack of permits of the deploying team, or simply because the datasets are kept in different secure environments. 

To address this challenge, we propose inferring the target task embedding using structured feedback from a \emph{domain expert}. The key assumption is that while the expert does not have access to the underlying data, they have domain knowledge of the underlying causal mechanisms. As such, they can indicate which of two tasks is closer to the new task. %%% modified this phrase to match what we are doing-- JL
For example, a doctor can usually assess the relative similarity between two diseases or two patients based on their medical expertise, since that is the reasoning typically used in their daily work. We elicit expert knowledge through pairwise similarity comparisons between source tasks and the target task. Then we use these comparisons to infer the target task embedding $z_{t'}$ by aligning it to the embeddings of the source tasks. 

We formalize expert feedback using comparison queries in pairs. Queries are denoted by $\xi_b = (i_b, j_b)$, for ${b\in \{1,\ldots,B\}}$, where $i_b, j_b \in \mathcal{T}_{\text{source}}$ are two source tasks with known embeddings. At each iteration $b$, the expert is asked:
\begin{quote}
    Is source task $i_b$ more similar to target task $t'$ than source task $j_b$?
\end{quote}
The expert response is denoted by $c_b \in \{0,1\}$, where ${c_n = 1}$ indicates that task $i_b$ is judged to be closer to the target task than $j_b$. Collecting $B$ such queries yields a dataset of expert feedback $\mathcal{C} = \{(\xi_b, c_b)\}_{b=1}^B$.

Let $z_{t'} \in \mathbb{R}^d$ denote the latent target task embedding. % and let $d_{\mathcal{Z}}(\cdot,\cdot)$ denote the Euclidean distance between the embeddings. 
For a query $\xi= (i, j)$, we define the relative dissimilarity between the source tasks with respect to the target task as
\begin{align*}
\Delta(\xi;z_{t'}) 
\;&=\; d_{\mathcal{Z}}(z_{t'}, z_{j})-d_{\mathcal{Z}}(z_{t'}, z_{i}) \\
&= \|z_{t'} - z_{j}\|_2 \;-\; \|z_{t'} - z_{i}\|_2.
\end{align*}
The quantity $\Delta(\xi;z_{t'})$ is a deterministic function of the known embeddings ($z_{i}$ and $z_{j}$) and encodes their relative geometry in the task-embedding space. A positive value of $\Delta(\xi;z_{t'})$ indicates that source task $i$ is closer to the target task $t'$ than $j$ is to $t'$.

Although the value of $\Delta(\xi;z_{t'})$ is deterministic (given the $z$s) the expert could \textit{potentially} give incorrect responses and as such we assign a likelihood to the responses. To this end, the expert responses are modeled by a probit likelihood; a standard approach in preference learning \citep{chu2005preference}. The likelihood for each comparison is 
\begin{equation}\label{eq:expert-likelihood}
p(c = 1 \mid z_{t'}, \xi, \tau)
\;=\;
\Phi\!\left( \tau\;\Delta(\xi;z_{t'}) \right),
\end{equation}
where $\Phi(\cdot)$ denotes the standard normal cumulative distribution function and $\tau > 0$ controls the noise level in expert judgments. We fix $\tau=1$ during inference (to avoid identifiability issues) and assign a Gaussian prior on the target-task embedding: $z_{t'} \sim \mathcal{N}(0, I_d)$. % When simulating expert feedback, we generate comparisons with different noise levels $\tau$ and conduct a sensitivity analysis to evaluate the effect of expert noise.  

Given a set of expert responses $\mathcal{C}$, the posterior is
\begin{equation*}
p(z_{t'} \mid \mathcal{C})
\;\propto\;
p(z_{t'}) \prod_{n=1}^N p(c_n \mid z_{t'}, \xi_n, \tau),
\end{equation*}
which we approximate using stochastic variational inference with $q_\varphi(z_{t'}) = \mathcal{N}\left(\mu_q, \sigma_q^2 I_d\right)$, as the approximating family.

To select the queries that will be presented to the expert we use Bayesian active learning by disagreement \citep[BALD;][]{houlsby2011bayesian}. At each iteration, we select the query $\xi=(i,j)$ that maximizes the expected information gain (EIG) about the target embedding $z_{t'}$:
\begin{equation}
\xi^\star
=
\arg\max_{\xi}
\;\mathrm{EIG}(\xi), 
\;
% \qquad
\mathrm{EIG}(\xi)
=
I(y_\xi; z_{t'} \mid \mathcal{C}), 
\end{equation}
where $I(\cdot;\cdot)$ denotes the mutual information and $y_\xi$ is the (future) expert response to query $\xi$. We use the objective in BALD \cite{houlsby2011bayesian}, with the decomposition
\begin{align*}
\mathrm{EIG}(\xi)&=H[y_\xi]
- \mathbb{E}_{z_{t'} \sim p(z_{t'}\mid \mathcal{C})}\!\left[ H[y_\xi \mid z_{t'}] \right], % \\
%&\approx H[y_\xi] - \mathbb{E}_{z_{t'} \sim q_\varphi(z_{t'})}\!\left[ H[y_\xi \mid z_{t'}] \right], 
%%% removed   \mid \mathcal{C}] for consistency with notation.
\end{align*}
where $H[p]=-p\log p-(1-p)\log (1-p)$ is the binary entropy function. Both entropy terms are approximated by sampling from the current variational posterior $q_\varphi(z_{t'})$. The complete active learning algorithm is in App.~\ref{app:expert-feedback-model}.

% \subsection{Incorporating Expert Knowledge into Meta-Learning with New Tasks}\label{sec:}
Finally, in Algorithm~\ref{alg:full_meta} we present the full meta-learning approach that incorporates expert knowledge for prediction in new tasks via causal embeddings. This algorithm combines the meta-learning approach we developed in Section~\ref{sec:meta-learning}, with the expert knowledge and queries developed in this section, and adds the adaptation step. An implementation is freely available in \href{github.com/lottamakinen/causal-meta-learning}{github.com/lottamakinen/causal-meta-learning}. %%%% Remember to add a zip file to the submission!
\begin{algorithm}[!h]
\caption{Causally-aware meta-learning with expert in the loop to predict in a new task}
\label{alg:full_meta}
\begin{algorithmic}[1]
\REQUIRE source datasets $\mathcal{D} = \{\mathcal{D}_t\}_{t\in \mathcal{T}_{\mathrm{source}}}$, target dataset $D_{t'}$, budget of expert queries $B$, expert comparisons $\mathcal{C}$
\STATE \textbf{Obtain} approximate posterior $q_{\phi}(z\mid \mathcal{D})$ using Alg.~\ref{alg:2level_meta_clean} %%% \{\phi_t\},\theta\mid \mathcal{D}
    \STATE \textbf{Actively} learn $\hat{z}_{t'}=\mathbb{E}_q[z_{t'}\mid \mathcal{C}]$ with Alg.~\ref{alg:expert-feedback} 
\STATE \textbf{Adapt} posterior to the target task $t'$, with Alg.~\ref{alg:2level_adapt_entmax_clean}.
\end{algorithmic}
\end{algorithm}
\section{Error Decomposition and Negative Transfer Under Task-Level Shift} \label{sec:error-decomposition}
We analyze the adaptation to a target task $t'\sim q(\mathcal{T}_{\text{target}})$ under a task-level shift ($p(\mathcal{T}_{\text{source}}) \neq q(\mathcal{T}_{\text{target}})$). Our analysis focuses on (i) bounding the prior induced risk, which reflects how well the transferred inductive bias aligns with the target task, and (ii) the mitigation of negative learning it induces.
%Our analysis separates two sources of target-task risk: (i) a prior-induced risk, that reflects how well the transferred inductive bias from source tasks aligns with the target task, 
%and (ii) an adaptation-dependent component capturing finite-sample effects, optimization, and model misspecification.  %%%%%% Why mention this second part if we are not doing it?
% We focus on bounding the prior-induced risk and how it mitigates negative transfer. 
% \paragraph{Prior-induced risk.}

Denote by $\mathcal{R}_{t'}(\phi)=\mathbb{E}_{(x,y)\sim p_{t'}} [\ell(\phi;x,y)]$ the population risk on task $t'$. For the prior (Eq.~\eqref{eq:task-prior-new}), the prior-induced risk is 
\begin{equation*}
\bar{\mathcal R}_t(z)
= \mathbb{E}_{\phi \sim p(\phi\mid z)}[\mathcal R_t(\phi)].
\end{equation*}
Define the mismatch relative to the source-task-induced prior with embedding $\bar{z}:=|\mathcal{T}_\mathrm{source}|^{-1}\sum_{t\in\mathcal{T}_\mathrm{source}}z_t$, as
\[
\mathcal{E_{\mathrm{prior}}}(t') := \big|\bar{\mathcal{R}}_{t'}(\hat{z}_{t'})- \bar{\mathcal{R}}_{t'}(\bar{z})\big|.
\]
Next, we formalize the intuition that nearby tasks in causal embedding space induce similar risks, and provides a decomposition of the generalization error in its three main components: the expert error, the causal discovery error, and the out-of-distribution error.
 % priors over predictors. 
\begin{proposition}% [$\varepsilon$-continuity of prior-induced risk]
\label{prop:eps-continuity}
Assume the loss is bounded, $\ell(\phi;x,y)\in[0,M]$. If $z_1$ and $z_2$ are $\varepsilon$-similar, then for any task $t$,
\begin{equation*}
\big|\bar{\mathcal R}_t(z_1)-\bar{\mathcal R}_t(z_2)\big| \le \frac{M\|W\|}{2\sigma}\,\varepsilon.
\end{equation*}
% where $\sigma^2$ is the prior variance.  %%% it should be clear from the original eqs. what sigma is.
% \begin{proposition}% [Prior mismatch under embedding error]
% \label{prop:prior-mismatch}
For task $t'$, the mismatch satisfies
\begin{equation}\label{eq:main-bound}
\mathcal E_{\mathrm{prior}}(t')
\;\le\;
\frac{M\|W\|}{2\sigma}
\big(
\varepsilon_{\mathrm{expert}}
+
\varepsilon_{\mathrm{causal}}
+
\varepsilon_{\mathrm{OOD}}
\big),
\end{equation}
where $\|W\|$ is the spectral norm of the embedding weight matrix, ${\varepsilon_{\mathrm{OOD}} = \|z_{t'}-\bar{z}\|}, {\varepsilon_{\mathrm{causal}} = \|\tilde z_{t'} - z_{t'}\|}$, and ${\varepsilon_{\mathrm{expert}} = \|\hat z_{t'} - \tilde z_{t'}\|};$ using $\tilde{z}_{t'}$ as the embedding recovered via causal discovery, and $\hat{z}_{t'}$ as the embedding used at deployment and inferred from expert feedback.
\end{proposition}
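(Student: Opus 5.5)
The plan is to bound the prior-induced risk difference by the total variation distance between the two Gaussian priors, then bound that distance via Pinsker's inequality and the closed-form Gaussian KL. Write $p_1=\mathcal N(\theta+Wz_1,\sigma^2 I_p)$ and $p_2=\mathcal N(\theta+Wz_2,\sigma^2 I_p)$. Since $\phi\mapsto\mathcal R_t(\phi)$ takes values in $[0,M]$ (it is an expectation of $\ell\in[0,M]$), we have
\[
\big|\bar{\mathcal R}_t(z_1)-\bar{\mathcal R}_t(z_2)\big|
=\Big|\int \mathcal R_t(\phi)\,\big(p_1(\phi)-p_2(\phi)\big)\,d\phi\Big|
\le M\,\mathrm{TV}(p_1,p_2).
\]
This uses the standard fact that for $f\in[0,M]$, $|\mathbb E_{p_1}f-\mathbb E_{p_2}f|\le M\sup_A|p_1(A)-p_2(A)|$. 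One can subtract the constant $M/2$ from $f$ to center it, but that refinement is not needed here.

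Next, the Gaussians share their covariance, so $D_{\mathrm{KL}}(p_1\|p_2)=\|W(z_1-z_2)\|^2/(2\sigma^2)$. Pinsker's inequality then gives
\[
\mathrm{TV}(p_1,p_2)\le\sqrt{\tfrac12 D_{\mathrm{KL}}}=\frac{\|W(z_1-z_2)\|}{2\sigma}\le\frac{\|W\|\,\|z_1-z_2\|}{2\sigma}\le \frac{\|W\|}{2\sigma}\varepsilon,
\]
using the spectral norm and $\varepsilon$-similarity. Combining the two displays yields the first claim with exactly the stated constant $M\|W\|/(2\sigma)$. As a sanity check, the exact total variation $2\Phi(\|W\Delta z\|/(2\sigma))-1\le \|W\Delta z\|/(\sigma\sqrt{2\pi})$ gives an even slightly better constant, so either route works.

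For the mismatch bound, the argument above in fact proves the Lipschitz statement $|\bar{\mathcal R}_{t'}(a)-\bar{\mathcal R}_{t'}(b)|\le \frac{M\|W\|}{2\sigma}\|a-b\|$ for arbitrary $a,b\in\mathcal Z$, not only for $\varepsilon$-similar pairs. I will apply it with $a=\hat z_{t'}$ and $b=\bar z$, and then use the triangle inequality through the intermediate points $\tilde z_{t'}$ and $z_{t'}$:
\[
\|\hat z_{t'}-\bar z\|\le\|\hat z_{t'}-\tilde z_{t'}\|+\|\tilde z_{t'}-z_{t'}\|+\|z_{t'}-\bar z\|
=\varepsilon_{\mathrm{expert}}+\varepsilon_{\mathrm{causal}}+\varepsilon_{\mathrm{OOD}}.
\]
This gives \eqref{eq:main-bound}. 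Equivalently, one could chain the first part three times, with each consecutive pair being similar at the corresponding $\varepsilon$, and sum the three bounds.

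I expect no serious obstacle. The only delicate point is getting the constant exactly right: the factor $1/2$ comes from combining the $1/2$ in Pinsker with the $2\sigma^2$ in the KL, and one must make sure the TV convention used (the supremum over sets, not the $L^1$ norm) matches the bound $M\,\mathrm{TV}$. Measurability and the integrability of $\mathcal R_t$ are immediate from boundedness.
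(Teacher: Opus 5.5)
Your proposal is correct and follows the paper's proof essentially step for step: you bound the risk gap by $M\,D_{\mathrm{TV}}(p_1,p_2)$, apply Pinsker with the closed-form equal-covariance Gaussian KL and the spectral norm of $W$, and then apply the resulting Lipschitz bound to $\hat z_{t'}$ and $\bar z$, using the triangle inequality through $\tilde z_{t'}$ and $z_{t'}$. Your side remarks are accurate but not needed: the one on the TV convention, and the one showing that the exact TV gives the sharper constant $1/\sqrt{2\pi}<1/2$.
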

The proof is given in App.~\ref{app:eps-continuity-full}. % The second part of this result bounds the generalization error induced by the prior mismatch.
%\end{align*}
Eq.~\eqref{eq:main-bound} shows that prior mismatch grows at most \textit{linearly} with geometric displacement in the embedding space. In particular, if the target task is $\varepsilon$-similar to the source tasks, then the induced prior mismatch is $O(\varepsilon)$.  Furthermore, prior mismatch improves when (i) the target task is closer to the source-induced global prior (small $\varepsilon_{\mathrm{OOD}}$), (ii) causal discovery is accurate (small $\varepsilon_{\mathrm{causal}}$), and (iii) expert inference is accurate (small $\varepsilon_{\mathrm{expert}}$). As such it provides three different ways to control the generalization error, instead of a single one.
% The full derivation is given in Appendix~\ref{app:error-decomposition}.

Negative transfer occurs when an inductive bias from the source tasks induces a worse performance (higher risk) compared to training \textit{without} transfer. For a method $X$ and task $t'$, negative transfer is
\begin{equation}
\mathrm{NT}_X(t') := \mathcal R_{t'}(\hat\phi_{t'}^{X}) - \mathcal R_{t'}(\hat\phi_{t'}^{\mathrm{NT}}), \label{eq:neg-transf}
\end{equation}
where $\hat\phi_{t'}^{\mathrm{NT}}$ is a predictor trained only on the target task. 

Finally, we present Theorem~\ref{th:negative-transfer}, providing an interpretable condition to establish that negative transfer is \textbf{mitigated} by using a causal prior and a good-enough expert.
\begin{theorem}% [Mitigation of negative transfer]
\label{th:negative-transfer}
Under mild assumptions, stable adaptation, and a well-conditioned embedding map $W$, if
\begin{equation*}
\varepsilon_{\mathrm{expert}} + \varepsilon_{\mathrm{causal}}
\;\leq\;
C\cdot\varepsilon_{\mathrm{OOD}},
\end{equation*}
for a constant $C$. Then, the causal prior mitigates negative transfer relative to the global prior, i.e.,
\begin{equation*}
\mathrm{NT}_{\mathrm{causal}}(t') \leq \mathrm{NT}_{\mathrm{glob}}(t').
\end{equation*}
\end{theorem}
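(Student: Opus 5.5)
The plan is to reduce the comparison of negative transfer to a comparison of prior-induced risks, and then apply Proposition~\ref{prop:eps-continuity}. In \eqref{eq:neg-transf} both $\mathrm{NT}_{\mathrm{causal}}(t')$ and $\mathrm{NT}_{\mathrm{glob}}(t')$ subtract the same no-transfer term $\mathcal R_{t'}(\hat\phi^{\mathrm{NT}}_{t'})$. The claim is therefore equivalent to
\[
\mathcal R_{t'}(\hat\phi^{\mathrm{causal}}_{t'})\le \mathcal R_{t'}(\hat\phi^{\mathrm{glob}}_{t'}),
\]
where the causal adaptation starts from the prior centered at $\theta+W\hat z_{t'}$ and the global one from $\theta+W\bar z$.

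I would make the ``stable adaptation'' hypothesis precise as follows: the adapted risk is a monotone, Lipschitz-type function of the prior-induced risk. Concretely,
\[
\mathcal R_{t'}(\hat\phi^{X}_{t'})=\bar{\mathcal R}_{t'}(z^X)+\Delta_X,
\qquad |\Delta_X-\Delta_{X'}|\le \kappa\,\|W(z^X-z^{X'})\|,
\]
with $\kappa$ small. This is the kind of statement one gets from a few gradient steps from the prior mean, or from a strongly convex inner loss where the KL term is a quadratic pull toward the prior mean. With this hypothesis it suffices to compare $\bar{\mathcal R}_{t'}(\hat z_{t'})$ with $\bar{\mathcal R}_{t'}(\bar z)$ relative to the ideal $\bar{\mathcal R}_{t'}(z_{t'})$.

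The key steps are these.
\begin{enumerate}
\item \emph{Upper bound for the causal prior.} Apply the $\varepsilon$-continuity part of Proposition~\ref{prop:eps-continuity} together with the triangle inequality $\|\hat z_{t'}-z_{t'}\|\le\varepsilon_{\mathrm{expert}}+\varepsilon_{\mathrm{causal}}$. This gives
\[
\bar{\mathcal R}_{t'}(\hat z_{t'})-\bar{\mathcal R}_{t'}(z_{t'})\le \tfrac{M\|W\|}{2\sigma}(\varepsilon_{\mathrm{expert}}+\varepsilon_{\mathrm{causal}}).
\]
\item \emph{Lower bound for the global prior.} Show that $\bar{\mathcal R}_{t'}(\bar z)-\bar{\mathcal R}_{t'}(z_{t'})\ge c\,\sigma_{\min}(W)\,\varepsilon_{\mathrm{OOD}}$ for some $c>0$.
\item \emph{Combine.} Under the hypothesis $\varepsilon_{\mathrm{expert}}+\varepsilon_{\mathrm{causal}}\le C\varepsilon_{\mathrm{OOD}}$, take
\[
C=\frac{2\sigma c\,\sigma_{\min}(W)}{M\|W\|}\quad(\text{minus the slack from }\kappa).
\]
This yields $\bar{\mathcal R}_{t'}(\hat z_{t'})\le\bar{\mathcal R}_{t'}(\bar z)$, and by the stable-adaptation step the inequality transfers to the adapted predictors and hence to NT.
\end{enumerate}
The condition number $\|W\|/\sigma_{\min}(W)$ appears in $C$, which is where ``well-conditioned $W$'' is used.

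The main obstacle is step~2. Proposition~\ref{prop:eps-continuity} only gives upper bounds, but the argument needs a reverse (growth) inequality: moving the prior mean away from the correct location must actually increase risk. I would try to obtain it as a ``mild assumption'' of local sharpness: the map $\mu\mapsto\mathbb E_{\phi\sim\mathcal N(\mu,\sigma^2I)}[\mathcal R_{t'}(\phi)]$ is minimized near $\theta+Wz_{t'}$ and grows at least linearly (or quadratically, giving a modified condition) in $\|\mu-(\theta+Wz_{t'})\|$. This matches Assumption~\ref{as:smoothness} read in reverse, namely that causal dissimilarity produces predictive dissimilarity.

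The displacement in parameter space is then bounded below by $\sigma_{\min}(W)\varepsilon_{\mathrm{OOD}}$, which requires $W$ to have full column rank. If a linear growth condition proves too strong, I would fall back to a quadratic growth condition. The theorem then holds with the same structure once $\varepsilon_{\mathrm{OOD}}$ is bounded away from zero, with $C$ depending on $\varepsilon_{\mathrm{OOD}}$ through the ratio of the linear upper bound and the quadratic lower bound.
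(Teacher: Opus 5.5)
Your reduction to $\mathcal R_{t'}(\hat\phi^{\mathrm{causal}}_{t'})\le\mathcal R_{t'}(\hat\phi^{\mathrm{glob}}_{t'})$ and your Step~1 are fine. \textbf{Step~2, which carries the whole theorem, fails as you set it up.}

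The map $\mu\mapsto\mathbb E_{\phi\sim\mathcal N(\mu,\sigma^2I)}[\mathcal R_{t'}(\phi)]$ is a Gaussian convolution of a function with values in $[0,M]$. It is therefore $C^\infty$, since you can differentiate under the integral.

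A differentiable function cannot satisfy $f(\mu^*+h)-f(\mu^*)\ge c\|h\|$ for all small $h$. Take $h$ along $-\nabla f(\mu^*)$, or any $h$ if the gradient vanishes, and let $\|h\|\to0$. A function bounded by $M$ also cannot grow linearly for large $\|h\|$. So no local sharpness constant $c>0$ exists that is uniform over target tasks. The bound $\bar{\mathcal R}_{t'}(\bar z)-\bar{\mathcal R}_{t'}(z_{t'})\ge c\,\sigma_{\min}(W)\,\varepsilon_{\mathrm{OOD}}$ then degenerates ($c\to0$) as $\varepsilon_{\mathrm{OOD}}\to0$.

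Your quadratic fallback does not rescue the stated result. You set a quadratic lower bound against the linear upper bound of Proposition~\ref{prop:eps-continuity} and a linear adaptation slack $\kappa\|W(\hat z_{t'}-\bar z)\|\le\kappa\|W\|(\varepsilon_{\mathrm{expert}}+\varepsilon_{\mathrm{causal}}+\varepsilon_{\mathrm{OOD}})$. This only gives a condition whose admissible $C$ shrinks with $\varepsilon_{\mathrm{OOD}}$, becomes vacuous below a threshold, and holds only on a bounded window of $\varepsilon_{\mathrm{OOD}}$. That is a weaker statement than the theorem with a fixed constant $C$.

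Your stable-adaptation hypothesis with small $\kappa$ is also strong: it asks that adaptation barely changes the ranking induced by the two priors.

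\textbf{The missing idea is to compare distances in parameter space instead of risk values.} That requires only qualitative monotonicity, with no growth rate. The paper makes three assumptions:
\begin{itemize}
\item the target risk minimizer is $\phi^*_{t'}=\theta+Wz_{t'}$;
\item $\mathcal R_{t'}$ is increasing in $\|\phi-\phi^*_{t'}\|$;
\item adaptation satisfies the two-sided bounds $\kappa_0\|\phi^{X,0}_{t'}-\phi^*_{t'}\|\le\|\hat\phi^X_{t'}-\phi^*_{t'}\|\le\kappa\|\phi^{X,0}_{t'}-\phi^*_{t'}\|$, where $\phi^{X,0}_{t'}$ is the prior mean of method $X$.
\end{itemize}
Under these, the lower bound you found hard is immediate. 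The global prior mean $\theta$ sits at distance $\|Wz_{t'}\|\ge\sigma_{\min}(W)\|z_{t'}\|$ from the optimum, which the paper identifies with $\varepsilon_{\mathrm{OOD}}$ via $\bar z\approx0$ (Remark~\ref{rem:lln}). Your centring at $\theta+W\bar z$ would give $\|W(z_{t'}-\bar z)\|$ directly. The causal prior mean sits at distance $\|W(\hat z_{t'}-z_{t'})\|\le\|W\|(\varepsilon_{\mathrm{expert}}+\varepsilon_{\mathrm{causal}})$.

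It then suffices that $\kappa\|W\|(\varepsilon_{\mathrm{expert}}+\varepsilon_{\mathrm{causal}})\le\kappa_0\,\sigma_{\min}(W)\,\varepsilon_{\mathrm{OOD}}$, i.e.\ $C=\kappa_0\sigma_{\min}(W)/(\kappa\|W\|)$. The paper writes $\kappa_0/\kappa$ and absorbs the condition number into ``well-conditioned''. Proposition~\ref{prop:eps-continuity} is never used. The lower contraction constant $\kappa_0>0$ does the job your small $\kappa$ was meant to do: it ensures adaptation cannot erase the advantage of the better-placed prior.

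If you want to stay at the level of prior-induced risks, you would need three things:
\begin{itemize}
\item quadratic growth;
\item the matching quadratic upper bound from smoothness, assuming $z_{t'}$ is a critical point of $\bar{\mathcal R}_{t'}$ (the Hessian of the smoothed risk in $z$ is $O(M\|W\|^2/\sigma^2)$);
\item a quadratic-order adaptation slack.
\end{itemize}
That is a heavier package of assumptions than the paper's.
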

The proof and full statement are in App.~\ref{app:negative-transfer}. The conditions of Theorem~\ref{th:negative-transfer} require that the error from locating the target task in the causal embedding space is smaller than the level of task-level shift between the source and target tasks.
\section{Evaluation under Task-level Shifts}\label{sec:experiments}
\subsection{Synthetic Setting}\label{sec:synthetic-expertiments}
We analyze the negative transfer, under distribution shift, of our method synthetic datasets. The data generation mechanisms are detailed in App.~\ref{app:data-generation}.

\paragraph{Experiment 1: Preventing Negative Transfer under Task Shift.} We evaluate whether conditioning the prior on causal task embeddings prevents negative transfer under task-level distribution shift in the presence of spurious correlations. For this, we compare three models: a global prior without task conditioning (meaning that it does not observe the $z_t$s), correlation-based task embeddings ($z_t$s inferred via correlations), and causal embeddings. Here we focus on the effect of OOD task-level shift. For this, we assume perfect causal discovery $\varepsilon_{\text{causal}}=0$, and embeddings obtained with a perfect expert ($\varepsilon_{\text{expert}}=0$). That is, the true $z_{t'}$ is known. In App.~\ref{app:additional-causal-noise} we show that the method stays robust under imperfect causal discovery when $\varepsilon_{\text{causal}}>0$. 
\begin{figure}[t]
    \centering
    \includegraphics[width=0.9\linewidth]{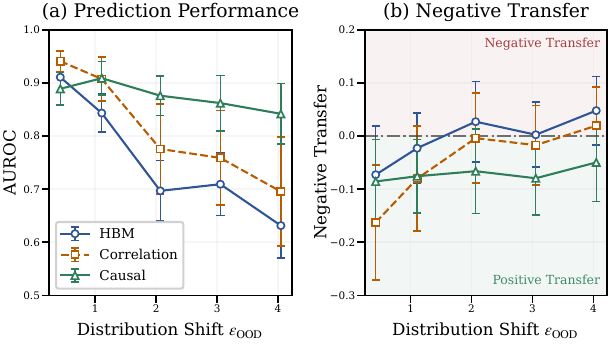}
    \caption{Performance of meta-learning models under increasing task-shift for Experiment 1. a) AUROC as a function of the distribution shift $\varepsilon_\mathrm{OOD}$. b) Change in log loss relative to no transfer baseline (BNN), where negative values indicate positive transfer and positive values negative transfer. Error bars denote standard deviation across 30 runs.   % \LM{Check font size, axis names, remove white padding to make bigger, legend names} \JL{Ensure that all the error bars fit inside the right hand side. Y-axis of right panel can just say 'no-transfer'. Unify the label for our method (e.g., call it 'causal'), same for the 'global prior'.}
    }
    \label{fig:experiment-1}
\end{figure}

Figure~\ref{fig:experiment-1}a shows the AUROC for different levels of task shift ($\varepsilon_\mathrm{OOD}$). The global prior and the correlation-based embeddings exhibit a degradation in predictive performance as the target task moves farther from the source distribution. In contrast, using the causal embeddings yields a more stable AUROC across all levels of task shift. Figure~\ref{fig:experiment-1}b illustrates this behavior through negative transfer (Eq.~\eqref{eq:neg-transf}), measured as the change in log-loss relative to a non-transfer baseline given by independently-trained task-specific Bayesian neural networks. Positive values indicate worse performance than the no-transfer baseline. The global prior has increasing negative transfer as $\varepsilon_\mathrm{OOD}$ grows, reflecting a mismatch under larger task shifts. Correlation-based embeddings also exhibit larger negative transfer for larger shifts: highlighting the instability of spurious correlations. This instability is further shown by the increasing variance across runs, as indicated by the widening error bars at higher shift levels. In contrast, the causal prior induces positive transfer across all OOD levels. The causal prior effectively mitigates negative transfer even under substantial task-level shift and spurious correlations, thereby explaining the stable performance. 

Notably, correlation-based embeddings outperform the other approaches in the in-distribution case (i.e.,~${\varepsilon_{\mathrm{OOD}}=0.5}$), where spurious correlations are predictive and therefore beneficial. However, this advantage disappears under task shift when these correlations no longer hold, while causal embeddings remain robust. 

%\LM{Error bars, volatility of correlation embeddings. In distribution tasks spurious correlations are predictive -> correlations better than causal, break under OOD shift}

%Figure~\ref{fig:experiment-1}a) shows the negative transfer  as a function of task-level shift $\varepsilon_{\text{OOD}}$. Positive values indicate a worse performance than a non-transfer baseline. For the non-transfer baseline, we use independently trained task-specific Bayesian neural networks. See Appendix~\LM{REF} for details.

%Consistent with the prior-mismatch term in Eq.~\eqref{eq:main-bound}, the global prior induces increasing negative transfer as the target task moves farther from the source task distribution. Correlation-based embeddings also exhibit increasing negative transfer, reflecting that spurious correlations do not remain stable under the task shift. In contrast, conditioning the prior on causal task embeddings keeps $\mathrm{NT}_X(t')$ close to zero across all OOD levels, indicating that causal conditioning prevents negative transfer even under substantial task shift and spurious correlations. 

%Figure~\ref{fig:auroc_vs_ood} shows how this effect translates into predictive performance. Both the global prior and correlation-based embeddings degrade as the shift increases, while the AUROC of the causal embeddings remains stable across OOD levels. An additional synthetic experiment without spurious correlations, where feature correlations align with causal mechanisms, is reported in  Appendix~\ref{app:additional_synthetic}.
\paragraph{Experiment 2: Using Expert Inferred Target Task Embeddings.}
We next consider a more realistic deployment scenario in which the causal embedding of the target task  is not known and must be inferred using expert feedback, as described in Section~\ref{sec:expert-inference}. We simulate an expert that provides noisy pairwise similarity judgments based on the true underlying causal task embeddings $z_t$. Expert responses are generated according to the likelihood in Eq.~\eqref{eq:expert-likelihood}, with expert reliability controlled by the parameter $\tau_\mathrm{expert}$. We set $\tau_\mathrm{expert}=2$, and report sensitivity analyses to expert noise in App.~\ref{app:additional_expert_noise}. To reflect realistic imperfection in causal discovery, we further corrupt the source-task embeddings used by the meta-learner with additive Gaussian noise $\eta \sim \mathcal{N}(0,(0.5)^2)$. The pairwise queries presented to the expert are selected using BALD. App.~\ref{app:additional_bald_random} shows comparisons to random query selection.

Figure~\ref{fig:experiment-2}a compares expert-inferred causal embeddings against meta-learning baselines under increasing task shift. The baselines included are a model agnostic meta-learning \citep[MAML;][]{finn2017model}, a deep kernel transfer model \citep[DKT;][]{patacchiola2020bayesian}, and an amortized hierarchical Bayesian meta-learning with a global prior \citep[HBM;][]{ravi_amortized_2018}. Both MAML and the global prior exhibit degraded performance as $\varepsilon_\mathrm{OOD}$ increases. In contrast, conditioning the prior on expert-inferred causal embeddings yields performance close to the causal oracle embeddings without expert inference across all OOD levels. This indicates that expert feedback enables accurate recovery of a target task embedding that is well-aligned with the underlying causal mechanism, even when causal discovery is imperfect. DKT performs strongly across all shift levels in this synthetic setting. As a metric-based method utilizing Gaussian processes, DKT is well-suited to low-dimensional problems, and therefore serves as a strong baseline in this experiment. As we show in Section~\ref{sec:ukbb}, this advantage diminishes in more complex real-world settings, where optimization-based and Bayesian meta-learning methods are more effective. Figure~\ref{fig:experiment-2}b shows the effect of expert query budget $B$ on performance for a single task ($\varepsilon_{\text{OOD}}=4.0$). Increasing the number of expert queries consistently improves AUROC, reflecting progressively more accurate inference of the target task embedding. 
\begin{figure}[t]
    \centering
    \includegraphics[width=0.9\linewidth]{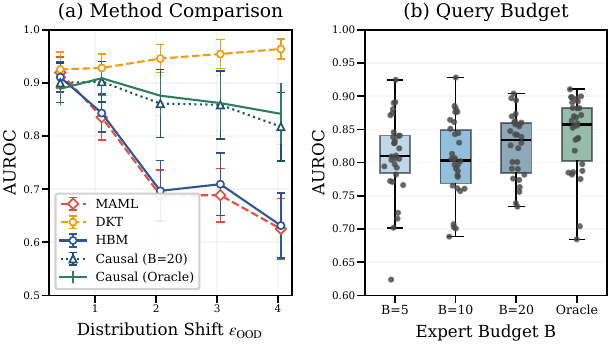}
    \caption{Method comparison with expert inferred embeddings for Experiment 2 across 30 runs. a) Average AUROC across increasing distribution shift $\varepsilon_\mathrm{OOD}$ levels for our method against baselines, error bars denote SD. b) Effect of expert query budget on prediction performance, for a single task (${\varepsilon=4.0}$).  % \LM{Change causal oracle line style match Figure 3. Name global prior as HBM similar to UKBB exp.}
    }
    \label{fig:experiment-2}
\end{figure}

\subsection{Cross-Disease Meta-Learning in the UK Biobank}\label{sec:ukbb}
Next, we evaluate our proposed method on clinical prediction tasks derived from the UK Biobank (UKBB), a large-scale population cohort with longitudinal electronic health records. In contrast to the settings in Section~\ref{sec:synthetic-expertiments}, disease prediction tasks exhibit heterogeneous populations, complex pathways, and unknown confounders.

We define each task as a binary disease prediction problem, where the goal is to predict the presence of a certain disease after an index year using patient-level covariates, clinical history, and treatment endpoints. Tasks correspond to different disease endpoints, in the cardiovascular and respiratory systems.
Details of the dataset are in App.~\ref{app:ukbb}, along with a description of the diseases. This setting ensures distribution shift between the tasks, since diseases differ in prevalence, causal mechanisms, and risk factors. 

Since the causal mechanisms are not observed, we construct embeddings using causal inference (CI) methods in the source tasks, using longitudinal data. We use three approaches: an instrumental variable approach with Mendelian Randomization \citep[MR;][]{didelez2007mendelian};  an invariant causal prediction \citep[ICP;][]{peters2016causal}; and a non-causal baseline, from observational temporal $\chi^2$-correlations (CHI2). See App.~\ref{sec:causal_methods} for details of the embedding methods. The target task embeddings are inferred using expert feedback. We simulate an expert whose responses are generated according to the likelihood (Eq.~\eqref{eq:expert-likelihood}), with $\tau_\mathrm{expert}=10$. The simulated expert derives the task similarities using the same CI method as used in the construction of the source task embeddings, applied to a larger reference dataset that includes source tasks, target tasks, and also additional related disease endpoints. This reflects their broader domain knowledge. We also consider an oracle setting in which the meta-learner directly uses these embeddings from a reference dataset, bypassing the expert model, and providing an upper bound on performance. 

We compare our approach against the same meta-learning baselines in Section~\ref{sec:synthetic-expertiments}. The performance of our model is assessed with the three embeddings (CHI2, ICP, MR) considered, using both the expert-inferred embeddings and oracle embeddings. App.~\ref{app:ukbb} includes additional details for all methods. Table~\ref{tab:ukbb_results_auroc} displays the average AUROC (10 seeds) of each method in four out-of-distribution tasks. Across tasks, standard meta-learning baselines (MAML, DKT, HBM) generally improve over the ``no-transfer'' baseline indicating that task transfer is beneficial, although with modest and task-dependent gains. Our causal embedding-conditioned method further improves performance. 

For the embeddings inferred via simulated expert feedback, we observe improvements on tasks J44 and J45 when using ICP embeddings. When the oracle embeddings are available, ICP yields consistent improvements across all target tasks. In contrast, MR and CHI2 exhibit less consistent behavior across tasks, reflecting differences in the task similarities induced by each method. As shown in App.~\ref{app:embedding-analysis}, ICP embeddings induce a more structured embedding space, and identify J44 and J45 as the most OOD tasks relative to the source-task mean, which explains the stronger performance gains observed for these tasks. Additional results are provided in App.~\ref{app:supplementary-ukbb}.
\begin{table}[!ht]
\centering
% \scriptsize %
\footnotesize
\caption{Average AUROC (SD) on target tasks from the UKBB, over 10 seeds. CHI2, ICP, and MR correspond todifferent embeddings in our method. Best in \textbf{bold}.}
\label{tab:ukbb_results_auroc}
\setlength{\tabcolsep}{1,5pt}
\begin{tabular}{lccccc}
\toprule
\textbf{Method} & \textbf{J44} & \textbf{J45} & \textbf{G45} & \textbf{I21}  \\
\midrule %%%% using (SD) takes less space
CHI2, exp. & 0.730 (.028) & 0.571 (.015) & 0.631 (.013) & 0.690 (.015) \\
ICP, exp.  & \textbf{0.820 (.007)} & 0.620 (.007) & 0.655 (.007) & 0.693 (.008) \\
MR, exp. & 0.805 (.009) & 0.618 (.009) & 0.656 (.007) & 0.677 (.009) \\ \midrule
CHI2, oracle & 0.815 (.010) & 0.597 (.007) & 0.671 (.007) & 0.705 (.006) \\
ICP, oracle & 0.816 (.006) & \textbf{0.622 (.004)} & \textbf{0.673 (.007)} & \textbf{0.714 (.007)} \\
MR, oracle   & 0.766 (.011) & 0.602 (.006) & 0.651 (.009) & 0.711 (.008) \\ \midrule
No transfer & 0.791 (.010) & 0.587 (.010) & 0.646 (.010) & 0.704 (.007) \\ \midrule 
MAML & 0.791 (.008) & 0.612 (.006) & 0.667 (.012) & 0.701 (.006) \\
DKT & 0.809  (.009) & 0.601 (.010) & 0.652 (.013) & 0.695 (.012) \\
HBM & 0.797  (.007) & 0.609 (.007) & 0.670 (.005) & 0.701 (.007) \\ 
\bottomrule
\end{tabular}
\end{table}
\section{Conclusion}\label{sec:conclusion}
We have introduced a novel Bayesian meta-learning method that successfully leverages causal embeddings and expert-knowledge. Our approach uses the causal embeddings to set task-specific priors, while combining their information via a hierarchical Bayesian prior. Additionally, we provide theoretical guarantees over the generalization risk and the negative transfer, when including imperfect experts and causal discovery methods. The method we have proposed performs better under distribution shift than alternative methods, in synthetic cases and in a large multi-disease prediction task on the UK Biobank.

Future work that stems from our contributions could look into uncertainty propagation from the expert inference and the causal embeddings to the meta-learning model. Another possible avenue to explore is to learn the causal embeddings jointly during the meta-training from the source data. Additionally, propagating the uncertainty of the expert and the causal discovery to the theoretical results would provide even more realistic guarantees over the generalization risk.

% Finally, another direction of research that opens up is to generalize the definition of the causal embeddings and the manner in which they affect the model. For example, through neural networks instead of linear equations.

\newpage
% \section*{Accessibility}

% \section*{Software and Data}

% Acknowledgements should only appear in the accepted version.
\section*{Acknowledgements}
This work was supported by the Research Council of Finland (Flagship programme: Finnish Center for Artificial Intelligence FCAI and decisions 359567 and 358958), European Research Council (ODD-ML, grant agreement 101201120) ELLIS Finland, EU Horizon 2020 (European Network of AI Excellence Centres ELISE, grant agreement 951847) and UKRI Turing AI World-Leading Researcher Fellowship (EP/W002973/1). We also acknowledge the computational resources provided by the Aalto Science-IT Project from Computer Science IT. This research has been conducted using the UK Biobank Resource under application number 77565.
% \section*{Impact Statement}

% In the unusual situation where you want a paper to appear in the
% references without citing it in the main text, use \nocite

\bibliography{references}
\bibliographystyle{icml2026}

%%%%%%%%%%%%%%%%%%%%%%%%%%%%%%%%%%%%%%%%%%%%%%%%%%%%%%%%%%%%%%%%%%%%%%%%%%%%%%%
%%%%%%%%%%%%%%%%%%%%%%%%%%%%%%%%%%%%%%%%%%%%%%%%%%%%%%%%%%%%%%%%%%%%%%%%%%%%%%%
% APPENDIX
%%%%%%%%%%%%%%%%%%%%%%%%%%%%%%%%%%%%%%%%%%%%%%%%%%%%%%%%%%%%%%%%%%%%%%%%%%%%%%%
%%%%%%%%%%%%%%%%%%%%%%%%%%%%%%%%%%%%%%%%%%%%%%%%%%%%%%%%%%%%%%%%%%%%%%%%%%%%%%%
\newpage
\appendix
\onecolumn
\section{Mathematical Derivations}
In this section we provide the formal mathematical proofs that by moving in the causal space, through the embeddings~($z$) we are able to move to different tasks. This is important because it guarantees that the method is able to capture new behaviours by changing solely through the embedding space. Additionally, we provide a Lipschitz-continuity proof of the error. This implies that small changes in the causal space do not catastrophically affect the error. 

%\JL{There are \textbf{many} papers that establish Lipschitz continuity of vanilla neural networks as a function of the inputs. Perhaps these would be relevant here, at least as motivation to what we are doing now.}

\subsection{Proof of Proposition~\ref{prop:eps-continuity}}\label{app:eps-continuity-full}
% Prior-induced risk is Lipschitz in embedding space}
We split the proof of Proposition~\ref{prop:eps-continuity} in two parts. First we demonstrate that the prior-induced risk is Lipschitz, and then we provide the error decomposition proof in Appendix~\ref{app:error-decomposition}.

\subsubsection{The Prior-induced risk is Lipschitz in the embedding space}\label{app:lipschitz}
\begin{proof}
For each task $t$, let $p_t$ denote the data-generating distribution over $(x,y)$, which does not depend on the model parameters $\phi$. For predictor $\phi$, define the task risk 
\[
\mathcal{R}_t(\phi)=\mathbb{E}_{(x,y)\sim p_t} [\ell(\phi;x,y)], \qquad \bar{\mathcal R}_t(z)
= \mathbb{E}_{\phi \sim p(\phi\mid z)}[\mathcal R_t(\phi)]
\]
and consider the embedding-conditioned prior 
\[
p(\phi \mid z) = \mathcal{N}(\theta+Wz, \sigma^2I).
\]
% \begin{lemma}[Continuity of prior-induced risk in embedding space]\label{lemma:lipschitz}
%     Assume $\ell(\phi;x,y) \in [0,M]$. Then for any task $t$ and embeddings $z_1,z_2 \in \mathcal{Z}$,
% \begin{equation}
%     \big|\bar{\mathcal R}_t(z_1)- \bar{\mathcal R}_t(z_2) \big| \leq \frac{M||W||}{2\sigma}\|z_1-z_2\|.
% \end{equation}
% In particular, if the causal embeddings $z_1,z_2$ are $\varepsilon$-similar, then 
% \begin{equation}
%    \big|\bar{\mathcal R}_t(z_1)- \bar{\mathcal R}_t(z_2) \big| \leq \frac{M||W||}{2\sigma}\varepsilon.
% \end{equation}
%\end{lemma}
Let $\ell(\phi;x,y)\in[0,M]$ be a bounded loss, and for $i=1,2$, denote by $p_i=p(\phi\mid z_i)$. Then, if $g(\phi)= M^{-1}\mathcal{R}_t(\phi)$, we have that
\begin{align*}
    \left|\bar{\mathcal R}_t(z_1)- \bar{\mathcal R}_t(z_2) \right| &= M\left|\mathbb E_{p_1}[g]-\mathbb E_{p_2}[g]\right| \\
    & = M\,\left|\mathbb E_{p_1}[g]-\mathbb E_{p_2}[g]\right| \\
    & \le M\,D_{\mathrm{TV}}(p_1,p_2),
\end{align*}
where the last line follows by definition of total variation distance \citep[Section~2.4, pp.~83--84,][]{tsybakov2009introduction}.
    
Next, applying Pinsker’s inequality \citep[Section~2.4, pp.~88--89,][]{tsybakov2009introduction}, we obtain
\[
D_{\mathrm{TV}}(p_1,p_2)
\le
\sqrt{\tfrac{1}{2}D_{\mathrm{KL}}(p_1\|p_2)}.
\]
 
Since $p_1$ and $p_2$ share covariance $\sigma^2I$, their KL divergence admits the closed form
\[
D_{\mathrm{KL}}\big(p(\phi\mid z_1)\|p(\phi\mid z_2)\big)
=
\frac{1}{2\sigma^2}\|W(z_1-z_2)\|^2
\le
\frac{\|W\|^2}{2\sigma^2}\|z_1-z_2\|^2.
\]
Applying the bound above yields:
\[
\big|\bar{\mathcal R}_t(z_1)-\bar{\mathcal R}_t(z_2)\big|
\le
\frac{M\|W\|}{2\sigma}\|z_1-z_2\|.
\]
Finally, since $\|z_1-z_2\|\leq \varepsilon$, due to $\varepsilon$-similarity of $z_1,z_2$, we obtain the desired result:
\[
\big|\bar{\mathcal R}_t(z_1)-\bar{\mathcal R}_t(z_2)\big|
\le
\frac{M\|W\|}{2\sigma}\,\varepsilon.
\]
\end{proof}
\subsubsection{Error decomposition on a target task}\label{app:error-decomposition}

\begin{proof}
Let $\hat{\phi}_{t'}^X$ denote the predictor obtained on a target task $t'$ by method $X$ and let $\mathcal{R}_{t'}(\hat{\phi}_{t'}^X)$ denote its target risk. For any embedding $z$, the prior-induced risk is given by
\[
\bar{\mathcal R}_{t'}(z)
= \mathbb{E}_{\phi \sim p(\phi\mid z)}[\mathcal R_{t'}(\phi)]
\]

Let $\hat{z}_{t'}$ denote the estimated embedding used for target task $t'$ at meta-test time, and $\bar{z} = \frac{1}{|\mathcal{T}_\mathrm{source}|}\sum_{t\in\mathcal{T}_\mathrm{source}}z_t$ is the empirical mean of the source embeddings. The prior mismatch is
\[\mathcal{E}_{\mathrm{prior}}(t') := |\bar{\mathcal R}_{t'}(\hat{z}_{t'}) - \bar{\mathcal R}_{t'}(\bar{z})|.\]

Denote by $z_{t'}$ the true latent causal embedding, $\tilde z_{t'}$ its causal discovery estimate, and $\hat z_{t'}$ the embedding used by the method, and define 
\[
\varepsilon_{\mathrm{expert}} := \|\hat z_{t'}-\tilde z_{t'}\|,
\quad
\varepsilon_{\mathrm{causal}} := \|\tilde z_{t'}-z_{t'}\|,
\quad
\varepsilon_{\mathrm{OOD}} := \|z_{t'}-\bar z\|.
\]

% \begin{lemma}[Prior mismatch bound]
% \label{lemma:prior-mismatch}

% Under the assumptions of Lemma~\ref{lemma:lipschitz},
% the prior mismatch satisfies
% \[
% \mathcal{E}_{\mathrm{prior}}(t')
% \le
% \frac{M\|W\|}{2\sigma}
% \big(
% \varepsilon_{\mathrm{expert}}
% +
% \varepsilon_{\mathrm{causal}}
% +
% \varepsilon_{\mathrm{OOD}}
% \big).
% \]
% \end{lemma}
Using the result proved in Appendix~\ref{app:lipschitz}, we obtain
\begin{equation*}
    \mathcal{E}_{\text{prior}}(t')\leq \frac{M||W||}{2\sigma}\|\hat{z}_{t'}-\bar{z}\|.
\end{equation*}
Using the triangle inequality on $d_{\mathcal{Z}}$, with intermediate embeddings $z_{t'},\tilde{z}_{t'},\hat{z}_{t'}$, we get 
\begin{equation*}% \label{eq:triangle-ineq}
    \|\hat{z}_{t'}-\bar{z}\|  \leq \underbrace{\|\hat{z}_{t'}-\tilde{z}_{t'}\|}_{\varepsilon_{\text{expert}}} + \underbrace{\|\tilde{z}_{t'}-z_{t'}\|}_{\varepsilon_{\text{causal}}} + \underbrace{\|z_{t'}-\bar{z}\|}_{\varepsilon_{\text{OOD}}}.
\end{equation*}
Combining the inequalities above the desired result is obtained:
\begin{equation*}% \label{eq:app-main-bound}
    \mathcal{E}_{\text{prior}}(t')\leq \frac{M||W||}{2\sigma}\Big(\varepsilon_{\text{expert}}+ \varepsilon_{\text{causal}}+\varepsilon_{\text{OOD}}\Big).
\end{equation*}
\end{proof}

\subsection{Full Statement and Proof of Theorem~\ref{th:negative-transfer}}\label{app:negative-transfer}
\begin{remark}\label{rem:lln}
    If the source embeddings are sampled from a distribution with mean $0$, then $\bar{z}\rightarrow0$, when the number of tasks $\big|\mathcal{T}_\mathrm{source}\big|\rightarrow\infty$ by a standard application of the law of large numbers, making the $\varepsilon_{\mathrm{OOD}}\approx\|z_{t'}\|$.  % \JL{Wouldn't we want this to be close to zero instead? For instance, it could be close to the distance with the nearest $z_t$?}
\end{remark}
% \JL{moved this from S.6, reworking it into the proof}
% We compare a causal-embedding meta-learning method to a global prior baseline. Since both methods are evaluated relative to the same no-transfer predictor, the no-transfer cancels. As such, it reduces to comparing the population risks between the predictors
% \[
% \mathrm{NT}_{\mathrm{causal}}(t') - \mathrm{NT}_{\mathrm{glob}}(t')
% =
% \mathcal R_{t'}(\hat\phi_{t'}^{\mathrm{causal}})
% -
% \mathcal R_{t'}(\hat\phi_{t'}^{\mathrm{glob}}).
% \]
We start this section with a formal definition of negative transfer, following the definition of \citet{wang2019characterizing}. Next, we explicitly provide the two conditions that are stated as mild in the main text, and proceed with the proof of Theorem~\ref{th:negative-transfer}.
\begin{definition}\citep[Negative transfer,][]{wang2019characterizing}
Let $\hat{\phi}_{t'}^{\mathrm{NT}}$ denote a predictor trained on the target task $t'$ only (no transfer). For any transfer method $X$, the negative transfer is given by the difference in risks
\[\mathrm{NT}_X(t') := \mathcal{R}_{t'}(\hat{\phi}_{t'}^X)-\mathcal{R}_{t'}(\hat{\phi}_{t'}^{\mathrm{NT}}).\] 
\end{definition}
The assumptions needed to prove Theorem~\ref{th:negative-transfer} are: % \LM{Could we have in theorem "Under assumptions A8-10" or similar} \JL{Yes, I added: 'under mild assumptions' to the theorem statement. But now we need to explicitly state the theorem here.}
\begin{assumption}% [...]
\label{ass:risk-minimizer}
The population risk minimizer for task $t'$ satisfies
\[
\phi_{t'}^* := \arg\min_{\phi} \mathcal{R}_{t'}(\phi)= \theta + W z_{t'}.
\]    
\end{assumption}

%\begin{assumption}[Stable adaptation]
%\label{ass:stable-adaptation}
%Adaptation contracts towards the task optimum, such that 
%\[
%\|\hat{\phi}_{t'}^{X}-\phi_{t'}^*\|
%\leq
%\kappa \|\phi_{t'}^{X,0}-\phi_{t'}^*\|,
%\quad \kappa \in (0,1),
%\]
%where $\phi_{t'}^{X,0}$ denotes the initialization induced by prior $X$. 
%\end{assumption}
%This is a standard stability assumption for regularized adaptation and is satisfied by variational inference via the ELBO. \JL{indicate at least one reference where this is a standard assumption. Ideally it would be a review of sorts.}

\begin{assumption}\label{ass:adaptation-bounds}
    There exists constants $0<\kappa_0<\kappa$ such that for method $X$
    \[
    \kappa_0\|\phi_{t'}^{X,0}-\phi_{t'}^*\| \leq \|\hat{\phi}_{t'}^{X}-\phi_{t'}^*\| \leq \kappa\|\phi_{t'}^{X,0}-\phi_{t'}^*\|.
    \]
\end{assumption}

\begin{assumption}[Risk monotonicity]
\label{ass:risk-monotonicity}
The population risk $\mathcal{R}_{t'}(\phi)$ is increasing in
$\|\phi-\phi_{t'}^*\|$ in a neighborhood of $\phi_{t'}^*$. % \JL{Where is this assumption as well?} 
\end{assumption}

%\begin{innercustomthm}{5}
\textbf{Theorem 5.}\textit{
Under Assumptions \ref{ass:risk-minimizer}--\ref{ass:risk-monotonicity}, stable adaptation, a well-conditioned embedding map $W$, and if
\begin{equation*}
\varepsilon_{\mathrm{expert}} + \varepsilon_{\mathrm{causal}}
\;\leq\;
C\cdot\varepsilon_{\mathrm{OOD}},
\end{equation*}
for a constant $C$. Then, the causal prior mitigates negative transfer relative to the global prior, i.e.,
\begin{equation*}
\mathrm{NT}_{\mathrm{causal}}(t') \leq \mathrm{NT}_{\mathrm{glob}}(t').
\end{equation*}
}
% \end{innercustomthm}

\begin{proof}
By definition of negative transfer,
\[
\mathrm{NT}_{X}(t')
=
\mathcal{R}_{t'}(\hat{\phi}_{t'}^{X})
-
\mathcal{R}_{t'}(\hat{\phi}_{t'}^{\mathrm{NT}}).
\]
Comparing negative transfer under the causal and global priors with the predictors $\hat{\phi}_{t'}^{\text{causal}}$ and $\hat{\phi}_{t'}^{\text{glob}}$ yields
\begin{align*}
\Delta_{\mathrm{NT}}(t')
&:=
\mathrm{NT}_{\mathrm{causal}}(t')
-
\mathrm{NT}_{\mathrm{glob}}(t') \\
&=
\Big(\mathcal{R}_{t'}(\hat{\phi}_{t'}^{\text{causal}})
-
\mathcal{R}_{t'}(\hat{\phi}_{t'}^{\mathrm{NT}})\Big)
-
\Big(\mathcal{R}_{t'}(\hat{\phi}_{t'}^{\text{glob}})
-
\mathcal{R}_{t'}(\hat{\phi}_{t'}^{\mathrm{NT}})\Big) \\
&=
\mathcal{R}_{t'}(\hat{\phi}_{t'}^{\mathrm{causal}})
-
\mathcal{R}_{t'}(\hat{\phi}_{t'}^{\mathrm{glob}}),
\end{align*}
since the no-transfer baseline cancels.

Thus, showing that the causal prior mitigates negative transfer relative to the global prior, means we need to show that:
\begin{align}
    \mathcal{R}_{t'}(\hat{\phi}_{t'}^{\mathrm{causal}}) \leq  \mathcal{R}_{t'}(\hat{\phi}_{t'}^{\mathrm{glob}}).\label{eq:smaller-risk}
\end{align}
By Assumption~\ref{ass:risk-monotonicity}, this reduces to showing
\begin{align}
    & \| \hat{\phi}_{t'}^{\mathrm{causal}}-\phi_{t'}^*\| \leq \| \hat{\phi}_{t'}^{\mathrm{glob}}-\phi_{t'}^*\|.\nonumber
\end{align}
% where the second line follows by Assumption~\ref{ass:risk-monotonicity}.
Now, let $\phi_{t'}^{\mathrm{causal},0}=\theta+W\hat{z}_{t'}$ and $\phi_{t'}^{\mathrm{glob},0}=\theta$ denote the prior means. Under Assumption~\ref{ass:risk-minimizer}, the distances between the global prior and the causal prior to task optimum $\phi_{t'}^*$ are respectively
\begin{align}
\|\phi_{t'}^{\mathrm{glob},0}-\phi_{t'}^*\|
& = \|\theta-(\theta+Wz_{t'})\| =
\|W z_{t'}\|,\text{ and }\label{eq:e-ood}\\
\|\phi_{t'}^{\mathrm{causal},0}-\phi_{t'}^*\|
& = \|(\theta+W\hat{z}_{t'})-(\theta+Wz_{t'})\| =
\|W(\hat z_{t'}-z_{t'})\|\label{eq:er-causal}.
\end{align}
Note that by Remark~\ref{rem:lln}, the last equality of Equation~\eqref{eq:e-ood} corresponds to $\varepsilon_{\text{OOD}}$. Additionally, the equality of Equation~\eqref{eq:er-causal} is bounded by $\varepsilon_{\mathrm{expert}}+\varepsilon_{\mathrm{causal}}$, where the expert error is $\varepsilon_{\mathrm{expert}}=\|\hat{z}_{t'}-\tilde{z}_{t'}\|$ and the causal discovery error is ${\varepsilon_{\mathrm{causal}}=\|\tilde{z}_{t'}-z_{t'}\|}$.

Applying Assumption~\ref{ass:adaptation-bounds}, we have the bounds 
\[
\kappa_0\|\phi_{t'}^{\mathrm{glob},0}-\phi_{t'}^*\| \leq \|\hat{\phi}_{t'}^{\mathrm{glob}}-\phi_{t'}^*\|, \qquad \|\hat{\phi}_{t'}^{\mathrm{causal}}-\phi_{t'}^*\| \leq \kappa\|\phi_{t'}^{\mathrm{causal},0}-\phi_{t'}^*\|.
\]

This implies that $\|\hat{\phi}_{t'}^{\mathrm{causal}}-\phi_{t'}^*\| \leq \|\hat{\phi}_{t'}^{\mathrm{glob}}-\phi_{t'}^*\| $ whenever
\[
\kappa\|\phi_{t'}^{\mathrm{causal},0}-\phi_{t'}^*\| \leq \kappa_0\|\phi_{t'}^{\mathrm{glob},0}-\phi_{t'}^*\|
\]

Substituting Equations~\eqref{eq:e-ood} and~\eqref{eq:er-causal} we get
\[
\kappa\|W(\hat z_{t'}-z_{t'})\| \leq \kappa_0\|Wz_{t'}\|.
\]
Since $W$ is well-conditioned, this is simplified to 
\[
\|\hat z_{t'}-z_{t'}\| \leq \frac{\kappa_0}{\kappa}\|z_{t'}\|,
\]
a sufficient condition for the causal prior to be closer to $\phi_{t'}^*$ than the global prior.

Defining $\varepsilon_{\mathrm{OOD}} := \|z_{t'}\|$ using Remark~\ref{rem:lln}, this yields 
\[
\varepsilon_{\mathrm{expert}}+\varepsilon_{\mathrm{causal}}
\leq C\cdot
\varepsilon_{\mathrm{OOD}},
\]
with constant $C=\frac{\kappa_0}{\kappa}$. The desired result, Equation~\eqref{eq:smaller-risk}, follows.
% \LM{Edit the theorem to have "up to constant C"} --- Done! --JL
\end{proof}
\clearpage 
\section{Full Algorithms}\label{app:algorithms}

\subsection{Meta-Learning}
We describe the full meta-training and meta-testing algorithms used in our experiments. Our approach builds on hierarchical Bayesian meta-learning formulations, namely \citet{ravi_amortized_2018}, but is implemented independently and extended to incorporate task-dependent priors via causal task embeddings. The losses considered are:
\begin{align}
    \mathcal{L}_1(\psi_t, \mathcal{D}_t^{(s)}, z_t) 
    &:= - \mathbb{E}_{q_{\psi_t}(\phi_t \mid \mathcal{D}_t^{(s)})} 
    \Big[ \log p(\mathbf{y}_t^{(s)} \mid \mathbf{x}_t^{(s)}, \phi_t) \Big] + D_{\mathrm{KL}}\Big(q_{\psi_t}(\phi_t \mid \mathcal{D}_t^{(s)}) \;\|\; p(\phi_t \mid z_t, \theta)\Big), \label{eq:loss-1} \\
    \mathcal{L}_2(\psi_t, \mathcal{D}_t^{(q)}, \lambda) & := - \mathbb{E}_{q_{\psi_t}(\phi_t \mid \mathcal{D}_t^{(s)})} 
\Big[ \log p(\mathbf{y}_t^{(q)} \mid \mathbf{x}_t^{(q)}, \phi_t) \Big]  + D_{\mathrm{KL}}\Big(q_\lambda(\theta) \;\|\; p(\theta)\Big).\label{eq:loss-2}
\end{align}
During meta-training (Algorithm~\ref{alg:2level_meta_clean}), we sample a mini-batch of source tasks and split data of each task $\mathcal{D}_t$ into support $\mathcal{D}_t^{(s)}$ and query sets $\mathcal{D}_t^{(q)}$. We perform $K$ inner-loop stochastic variational updates on $\mathcal{D}_t^{(s)}$ by minimizing $\mathcal{L}_1$ (Equation~\eqref{eq:loss-1}) starting from embedding conditional prior to obtain a variational task posterior $\psi_t^{(k+1)}$ (Line~\ref{line:inner-loop}). The outer loop then updates both the embedding weights $W$ and the global variational posterior $\lambda$ (Lines~\ref{line:outer-loop1} and~\ref{line:outer-loop2}), by minimizing $\mathcal{L}_2$ (Equation~\eqref{eq:loss-2}) across tasks, including $\ell_2$ regularization on $W$. 
\renewcommand{\thealgorithm}{B.\arabic{algorithm}} %%% resetting algorithms to ensure they have the 'B' at the beginning and people know to look in appendices.
\setcounter{algorithm}{0}
\begin{algorithm}[!htb]
\caption{Causally-Aware Meta-Learning: meta-training}
\label{alg:2level_meta_clean}
\begin{algorithmic}[1]
\REQUIRE Task distribution $p(\mathcal{T})$; learning rates $\beta_\theta, \beta_\phi$; inner steps $K$; mini-batch sizes $M_\text{tasks}, M_\text{samples}$
\REQUIRE Task embeddings $\{z_t\}_{t \in \mathcal{T}}$, embedding prior weights $W$, regularization coefficient $\gamma_W$
\STATE \textbf{Initialize:} global posterior parameters $\lambda = (\mu_\lambda, \sigma_\lambda^2)$, embedding weights $W$
\WHILE{not done}
    \STATE Sample mini-batch of $M_\text{tasks}$ tasks $t \sim p(\mathcal{T})$
    \FORALL{tasks $t$ in mini-batch}
        \STATE Sample $M_\text{samples}$ from $\mathcal{D}_t$ and split into support $\mathcal{D}_t^{(s)}$ and query $\mathcal{D}_t^{(q)}$
        \STATE Compute: $\mu_t = \mu_\lambda + W z_t$
        \STATE Initialize task posterior $\psi_t^{(0)}$ from embedding-conditional prior $q_{(\mu_t, \sigma_t^2)}(\theta)$
        \FOR{$k = 0$ to $K-1$}
            \STATE $\psi_t^{(k+1)} \gets \psi_t^{(k)} - \beta_\phi \nabla_{\psi_t^{(k)}} \mathcal{L}_1(\psi_t^{(k)}, \mathcal{D}_t^{(s)}, \mu_t, \sigma_t^2)$\COMMENT{Inner loop adaptation} \label{line:inner-loop}
        \ENDFOR
    \ENDFOR
\STATE Update global posterior and embedding parameters \COMMENT{Outer-loop adaptation} 
\STATE $W \gets W - \beta_W \nabla_{W} \frac{1}{M_\text{tasks}} \sum_{t} \mathcal{L}_2(\psi_t^{(K)}, \mathcal{D}_t^{(q)}, \mu_t, \sigma_t^2) + \gamma_W \| W \|t_2^2$ \label{line:outer-loop1}
\STATE $\lambda \gets \lambda - \beta_\theta \nabla_\lambda \frac{1}{M_\text{tasks}} \sum_{t} \mathcal{L}_2(\psi_t^{(K)}, \mathcal{D}_t^{(q)}, \mu_t, \sigma_t^2)$\label{line:outer-loop2}

\ENDWHILE
\end{algorithmic}
\end{algorithm}

For the meta-test approach (Algorithm~\ref{alg:2level_adapt_entmax_clean}), we split the target task data $\mathcal{D}_{t'}$ into adaptation $\mathcal{D}_{t'}^{\mathrm{tr}}$ and test sets $\mathcal{D}_{t'}^{\mathrm{ts}}$. We initialize the prior for target task $t'$ from the global training posterior $\lambda$, conditioned on the target task embedding $z_{t'}$, and adapt in the inner loop (Line~\ref{line:inner-loop-B2}) on target adaptation data $\mathcal{D}_{t'}^{\mathrm{tr}}$ by minimizing $\mathcal{L}_1$ (Equation~\eqref{eq:loss-1}), making the final predictions on the held-out target data.  
\begin{algorithm}[H]
\caption{Causally-Aware Meta-Learning: meta-testing}
\label{alg:2level_adapt_entmax_clean}
\begin{algorithmic}[1]
\REQUIRE Target tasks $\mathcal{T}_\text{target}$, task embeddings $z_{t'}$, global training posterior $\lambda$, learned embedding weights $W_\mu$, inner steps $K$, learning rate $\beta_\phi$
\FORALL{target tasks $t' \in \mathcal{T}_\text{target}$}
    \STATE Split $\mathcal{D}_{t'}$ into adaptation set $\mathcal{D}_{t'}^\text{tr}$ and test set $\mathcal{D}_{t'}^\text{ts}$ 
    \STATE Compute: $\mu_{t'} = \mu_\lambda + W_\mu z_{t'}$
    \STATE Initialize task posterior $\psi_{t'}^{(0)} = (\mu_{t'}, \sigma_{t'}^2)$
    \FOR{$k = 0$ to $K-1$}        
        \STATE Sample mini-batch of $M_\text{samples}$ from $\mathcal{D}_{t'}^\text{tr}$
        \STATE $\psi_{t'}^{(k+1)} \gets \psi_{t'}^{(k)} - \beta_\phi \nabla_{\psi_{t'}^{(k)}} \mathcal{L}_1(\psi_{t'}^{(k)}, \mathcal{D}_{t'}^{\text{tr}}, \mu_{t'}, \sigma_{t'}^2)$ \label{line:inner-loop-B2} \COMMENT{Inner loop adaptation}
    \ENDFOR
    \STATE Use final $\psi_{t'}^{(K)}$ for predictions on $\mathcal{D}_{t'}^\text{ts}$
\ENDFOR
\end{algorithmic}
\end{algorithm}

\subsection{Expert Feedback Model}\label{app:expert-feedback-model}
In this section, we describe the expert feedback model (Algorithm~\ref{alg:expert-feedback}) introduced in Section~\ref{sec:expert-inference}. For each query $b=1,\dots,B$, we select a most informative pair of source tasks using Bayesian active learning by disagreement (BALD), where entropies $H[\cdot]$ are approximated via Monte Carlo samples from the current variational posterior $q_\varphi(z_{t'})$. We query the expert to determine which source task is closer to the target and add the observed response to the comparison set $\mathcal{C}$ before updating a variational posterior over the target task embedding $z_{t'}$ via stochastic variational inference. After $B$ queries, the posterior mean of the inferred embedding is returned. 
\begin{algorithm}[!h]
\caption{Expert-guided inference of target task embedding}
\label{alg:expert-feedback}
\begin{algorithmic}[1]
\REQUIRE Source task embeddings $\{z_i\}_{i \in \mathcal{T}_{\text{source}}}$, target task $t'$, query budget $B$
\STATE Initialize prior $p(z_{t'}) = \mathcal{N}(0, I_d)$
\STATE Initialize comparison set $\mathcal{C} \gets \emptyset$

\FOR{$b = 1, \dots, B$}
    \STATE Select query $\xi_b = (i_b, j_b)$ with BALD
    \[
    \xi_b = \arg\!\max_{(i,j)} H[y_{ij} \mid \mathcal{C}] - \mathbb{E}_{z_{t'} \sim q_\varphi}[H[y_{ij} \mid z_{t'}]].
    \]
    \STATE Query expert: is $i_b$ or $j_b$ closer to $t'$?
    \STATE Receive response $c_b \in \{0,1\}$.
    \STATE $\mathcal{C} \gets \mathcal{C} \cup \{(\xi_b, c_b)\}$.
    \STATE Update $q_\varphi(z_{t'})$ via SVI with likelihood $p(c_b \mid z_{t'}, \xi_b) = \Phi(\tau \cdot \Delta(\xi_b; z_{t'}))$.
\ENDFOR

\STATE \textbf{Return} $\hat{z}_{t'} = \mathbb{E}_{q}[z_{t'} \mid \mathcal{C}]$.
\end{algorithmic}
\end{algorithm}

\clearpage 
\section{Additional Experiments}

In this section we present additional experimental results. 

\subsection{Imperfect Causal Discovery}\label{app:additional-causal-noise}
We study the effect of imperfect causal discovery on meta-learning performance, by constructing noisy versions of the causal task embeddings. For each task task embedding $z_t$, we sample a task-specific perturbation direction $d_t \sim \mathcal{N}(0,I_4)$ and set $u_t = d_t/\|d_t\|$.
With a noise level $\sigma_c = \{0, 0.5, 0.8\}$, we define
\begin{equation*}
    \tilde z_t
    =
    \|z_t\| \cdot \frac{z_t + \sigma_c \|z_t\| u_t}{\|z_t + \sigma_c \|z_t\| u_t\|}.
\end{equation*}

This construction introduces directional misalignment in the embedding space while preserving the embedding scale. The resulting embeddings are used to condition the task-specific priors in the meta-learning while the data-generating process remains unchanged. The results are presented in Figure~\ref{fig:noise_ablation}, where AUROC as a function of $\varepsilon_\mathrm{OOD}$ is presented. We see that with moderate noise $\sigma_c=0.5$ the performance of meta-learning stays robust under task-shift and starts to degrade only under high noise $\sigma_c=0.8$.

\begin{figure}[h]
    \centering
    \includegraphics[width=0.5\linewidth]{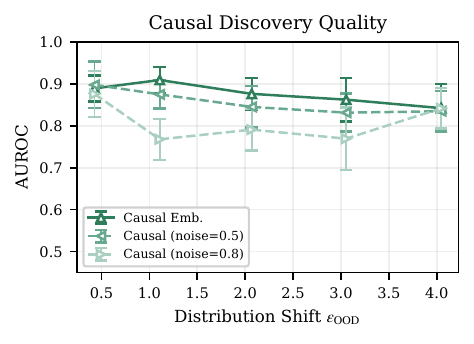}
    \caption{Performance of causally-aware meta-learning models under increasing noise in the causal task embeddings.}
    \label{fig:noise_ablation}
\end{figure}

\subsection{Expert Model Sensitivity to Expert Noise}\label{app:additional_expert_noise}

In this experiment, we assess the sensitivity of the expert model to expert unreliability. To simulate an unreliable expert, we vary the $\tau_\text{expert}$ parameter in the likelihood (Eq.~\eqref{eq:expert-likelihood}) that is used to  simulate the expert answers. Figure~\ref{fig:tau-sensitivity} shows the RMSE between inferred and true task embeddings as a function of the number of expert queries for target tasks ordered by increasing distribution shift for $\tau_\text{expert}=\{0.5, 1.0, 2.0\}$, with higher values corresponding to a more reliable expert. Across all tasks, lower expert noise leads to faster convergence and lower final error. For task 20 (magnitude of distribution shift $s=0.01$), the true task embedding lies close to the prior mean, explaining the almost zero RMSE before expert querying. 

\begin{figure}[H]
    \centering
    \includegraphics[width=1.0\linewidth]{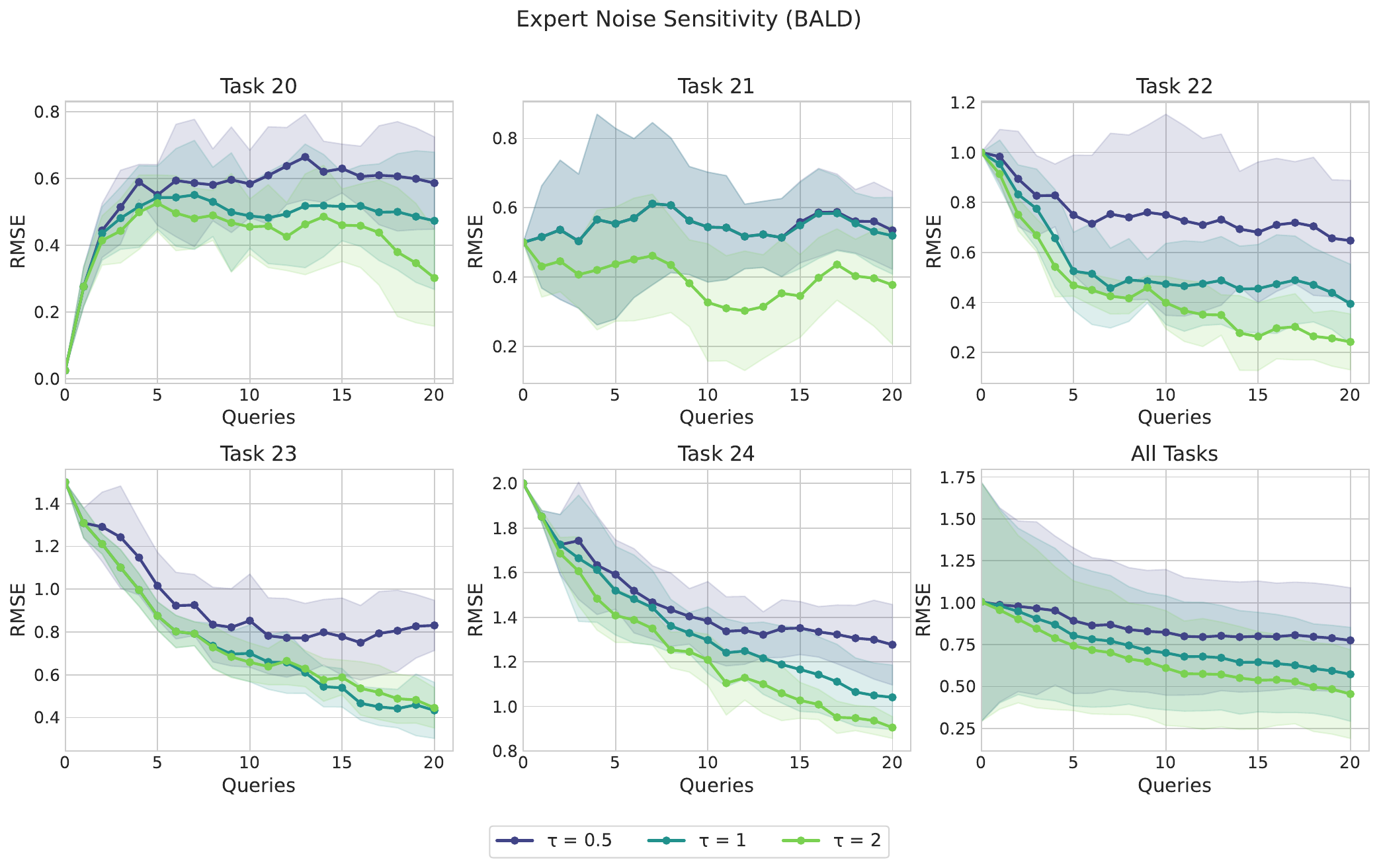}
    \caption{Expert-noise sensitivity under active querying. RMSE as a function of the number of expert queries at different noise levels $\tau_\text{expert}$ for multiple target tasks ordered with increasing distribution shift. Lower $\tau_\text{expert}$ corresponds to noisier expert feedback. The final panel aggregates results across all target tasks. }
    \label{fig:tau-sensitivity}
\end{figure}

\subsection{Active Query Strategy vs Random}\label{app:additional_bald_random}

In this experiment, we compare active query strategy with BALD against random querying, to assess if active querying leads to more accurate inference with smaller query budget. Figure~\ref{fig:bald-random} shows the RMSE between inferred and true task embeddings as a function of number of expert queries for target tasks ordered by increasing distribution shift for both query strategies at $\tau_\text{expert}=1.0$. We see that active querying achieves faster reductions in embedding error and lower final RMSE across tasks at high level of task shift. At low shift (tasks 20 and 21), the performance is similar active vs random, indicating that in in-distribution regimes, all queries are similarly informative. 

\begin{figure}[H]
    \centering
    \includegraphics[width=1.0\linewidth]{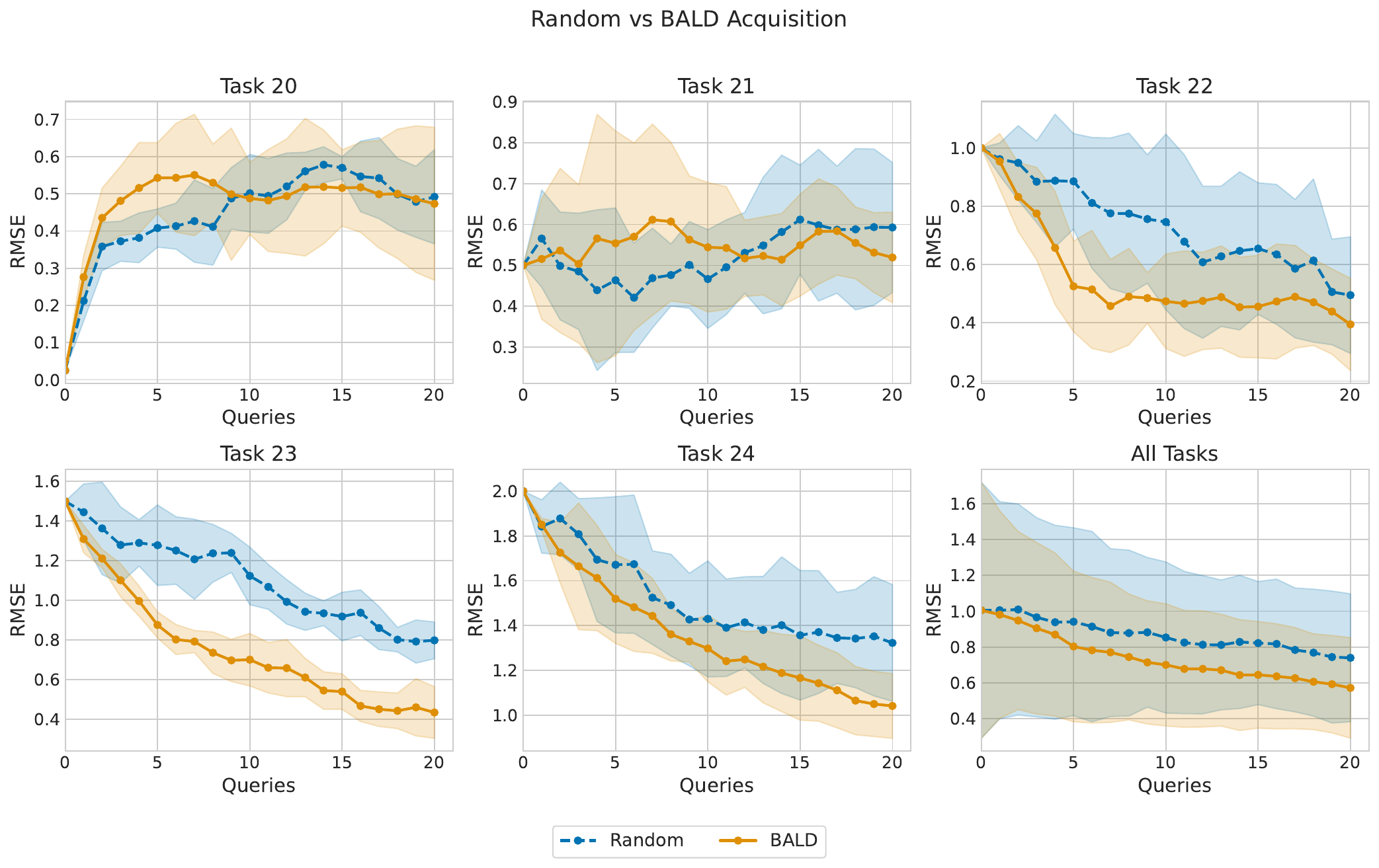}
    \caption{Random vs. active (BALD) querying. RMSE as a function of the number of expert queries for multiple target tasks ordered with increasing distribution shift, $\tau_\text{expert}=1.0$. The final panel aggregates results across all target tasks.}
    \label{fig:bald-random}
\end{figure}

\clearpage
\section{Experimental Details}\label{app:experimental-details}
In this section we describe the implementation details of our method and the other baselines, as well as the experimental settings for our synthetic experiment and the application to the cross-disease transfer in the UK Biobank. 

\subsection{Data Generation for Synthetic Experiments}\label{app:data-generation}
% \paragraph{Dataset Generation.} 
% Following the notation from Section~\ref{sec:background}, w
Following the notation from Section~\ref{sec:background-causal}, for each task $t \in \mathcal{T}=\mathcal{T}_{\mathrm{source}}\cup \mathcal{T}_{\mathrm{target}}$ we simulate a dataset $\mathcal{D}_t = \{(\mathbf{x}_{t,i},y_{t,i})\}_{i=1}^{500}$, by using a (task-specific) structural causal model given by $\mathcal{W}_t=(\mathbf{V}, \mathbf{U}, F_t, P_U)$. All tasks share the endogenous variables $\mathbf{V} = \{X_1, \ldots, X_{10}\}$ and the causal structure, but differ in their structural equations $F_t$. The values of $X_i$ are generated by sampling $\mathbf{x}_{t,i} \overset{i.i.d.}{\sim} \mathcal{N}(0,I_{10})$. For each task we assign a latent causal task embedding $z_t \in \mathcal{Z} \subseteq\mathbb{R}^4$, that parameterizes the task-specific data-generating mechanism. Source tasks ($|\mathcal{T}_{\text{source}}|=20$) are sampled from a centered distribution $z_t \overset{i.i.d.}{\sim} \mathcal{N}(0, (0.8)^2I_4)$, for $t \in \mathcal{T}_{\text{source}}$. Target tasks $t'$ ($|\mathcal{T}_{\text{target}}|=5$) are generated by moving along a fixed unit vector $\delta \in \mathbb{R}^4$ with $\|\delta\|=1$, $z_{t'} = s \cdot \delta$, for $t' \in \mathcal{T}_{\text{target}}$, where $s \in \{0.1, 1.0, 2.0, 3.0, 4.0\}$ controls the magnitude of distribution shift, with smaller values corresponding to smaller shifts. 

To generate binary outcome variables $Y_t$, we first generate the parent set $\mathcal{P} = \{1, 2,3,4\}$ (i.e., features $X_1,\ldots,X_4$), and intermediate continuous $Y^{(c)}_t$, by
\begin{equation*}
    Y_t^{(c)} = \sum_{j \in \mathcal{P}} X_j \, e_{t,j} + U_Y; \; U_Y\overset{i.i.d.}{\sim}\mathcal{N}(0, (0.6)^2).
\end{equation*}
The task-specific causal effects $e_t \in \mathbb{R}^{10}$ are generated by
\begin{equation*}
    e_t = b + Wz_t + \eta_t, \qquad \eta_t \overset{i.i.d.}{\sim} \mathcal{N}(0, (0.15)^2I).
\end{equation*}
The shared weight matrix $W\in\mathbb{R}^{10 \times 4}$ has rows $W_j \overset{i.i.d.}{\sim} \mathcal{N}(0, (0.5)^2 I_{4})$, for parent features $j \in \mathcal{P}$ and $W_j = 0$ otherwise. The bias vector $b \in \mathbb{R}^{10}$ has entries $b_j \overset{i.i.d.}{\sim} \mathrm{Unif}(0.5, 1.0)$, for $j \in \mathcal{P}$, and $b_j = 0$ otherwise. For non-parent features, $j \notin \mathcal{P}$, we set $e_{t,j}=0$.

To model realistic dataset bias and shortcut learning, we include the spurious feature $X_{10}$ with $10 \notin \mathcal{P}$: 
\begin{equation*}
    X_{10} = \alpha(s) \cdot (2Y_t - 1) + \epsilon, \qquad \epsilon \overset{i.i.d.}{\sim} \mathcal{N}(0, 1),
\end{equation*}
where $\alpha(s) = 0.5 \cdot (1 - s/s_{\max})$ with $s_{\max}=4.0$ for target tasks, and $\alpha=0.3$ for source tasks (resulting in a correlation of approximately $0.3$). $X_{10}$ is an observed feature present in every task, but it is not a causal parent of $Y_t$. Thus, it does not enter the structural equation for $Y_t^{(c)}$, and the spurious correlation decays with distribution shift magnitude $s$ while the causal mechanisms remains unchanged.

The final outcomes are obtained by thresholding at the $70^{\text{th}}$ percentile of the continuous distribution of $Y^{(c)}_t$ (per task). The values above the threshold give $Y_t=1$ and below give $Y_t=0$, resulting in approximately 30\% of positive labels per task.

\subsection{Expert Inference Model Implementation Details}
In this section, we explain the implementation details of our expert inference model. A standard Gaussian prior $p(z_{t'})=\mathcal{N}(0,I)$ is placed over the target task embedding, and we fix temperature $\tau=1.0$. For experiments, simulated expert responses may be generated with a separate noise parameter $\tau_\mathrm{expert}$ to assess robustness to model mismatch. We use query budget $B=20$ unless otherwise stated. The expert inference model is implemented in Pyro \citep{bingham2018pyro} and uses stochastic variational inference with diagonal Gaussian variational family (AutoDiagonalNormal) and the Trace ELBO objective with Adam optimizer. We use a learning rate of $0.01$, perform $150$ SVI steps after each expert query, and use $200$ Monte Carlo samples to approximate the BALD acquisition function. All experiments use fixed random seeds. 

\subsection{Meta-Learning and Baseline Implementation Details}

In this section we detail the implementation details of our method and baseline models. The complete implementation can be found in Supplementary material. 

\paragraph{Shared neural network architecture.}
The underlying predictive model for all methods is a Bayesian neural network (BNN) based on the long short-term memory \citep[LSTM;][]{hochreiter1997long} architecture. The architecture consists of two parallel networks with an LSTM layer for longitudinal data, and a fully connected multilayer perceptron (MLP) for tabular data. The outputs of the two networks are concatenated and passed through a linear layer, which produces logits to use in binary classification. Unless otherwise stated, all models use two hidden layers, with layer size of 32 for both MLP and LSTM components.

\paragraph{Hierarchical meta-Learning models.} 
Both meta-learning models are implemented in PyTorch. Task-specific and global model parameters are updated using diagonal variational inference operators implemented with the TorchOpt \citep{JMLR:TorchOpt} and Posteriors \citep{duffield2024scalable} libraries. The parameters of the embedding weight matrix $W$ are optimized separately using the Adam optimizer. 

To mitigate known optimization instabilities in gradient-based meta-learning, we apply several standard stabilization strategies. The embedding-conditioned prior adaptations are norm-constrained relative to the corresponding global parameter norms via an adaptation scale $\alpha$, and only the prior mean is adapted while the variance is kept fixed. Embedding conditioned weights in $W$ are initialized to zero and regularized using $\ell_2$ penalty with regularization coefficient $\gamma_W$, with additional gradient clipping ($\|\nabla_W\|_2 \leq 1.0$) and weight-norm constraints ($\|W\|_2 \leq 0.5$). During the variational inference, KL-divergence terms are scaled to account for mini-batching, and we employ tempered posteriors with temperature hyperparameters $T_1,T_2$. 

\paragraph{Bayesian neural network baseline.} For a no-transfer baseline, we train a separate BNN independently for each target task using the same predictive architecture described above. Each model is trained from scratch using variational inference, without any parameter sharing across tasks. Similar to the hierarchical meta-learning models, the BNN baseline is optimized using diagonal variational inference with TorchOpt and Posteriors libraries. 

\paragraph{Model agnostic meta-learning baseline.}
We are using gradient-based Model Agnostic Meta-Learning \citep[MAML;][]{finn2017model}, following the implementation in the original paper modified to work with our data loader and shared neural network architecture. 

\paragraph{Deep kernel transfer learning baseline.} We are using deep kernel transfer learning  \citep[DKT;][]{patacchiola2020bayesian}, following the implementation in the original paper modified to work with our data loader and shared neural network architecture. 

\subsection{Synthetic Experimental Settings}

The hyperparameters used for our synthetic experiments are detailed in Table~\ref{tab:hyperparam_final}. For all methods, task data are split into training (70\% of samples) and test sets (30\% of samples). The remaining data is further split into support (adaptation) and query (validation) datasets with 50-50\% split. We use two-fold cross validation for all methods. For MAML and DKT we use 100 samples per task. Bayesian predictive quantities are approximated using Monte Carlo sampling with $S=10$ posterior
samples. The convergence of the models is verified using early stopping based on validation AUROC. All reported results are averaged over multiple runs with fixed random seeds. 

\begin{table}[t]
\centering
\caption{Final hyperparameter values used in synthetic experiments.}
\label{tab:hyperparam_final}
\begin{tabular}{lll}
\toprule
\textbf{Method} & \textbf{Hyperparameter} & \textbf{Value} \\
\midrule
\multirow{4}{*}{BNN}
& Inner learning rate & $3\times10^{-3}$ \\
& Inner temperature & $10^{-1}$ \\
& Global prior std.\ $\sigma$ & $1.0$ \\
& Model init.\ log std. & $-1$ \\
\midrule
\multirow{5}{*}{MAML}
& Inner learning rate & $10^{-2}$ \\
& Outer learning rate & $10^{-3}$ \\
& Num. inner steps & $4$ \\
& Tasks per meta-batch & $4$ \\
& Samples per iteration & $100$ \\
\midrule
\multirow{4}{*}{DKT}
& Outer learning rate & $10^{-3}$ \\
& GP optimization steps & $50$ \\
& Tasks per meta-batch & $4$ \\
& Samples per iteration & $100$ \\
\midrule
\multirow{8}{*}{HBM}
& Inner learning rate & $10^{-4}$ \\
& Inner temperature & $5\times10^{-4}$ \\
& Global prior std.\ $\sigma$ & $0.05$ \\
& Prior scaling & $10^{4}$ \\
& Model init.\ log std. & $-3$ \\
& Num. inner updates & $4$ \\
& Outer learning rate & $10^{-3}$ \\
& Outer temperature & $5\times10^{-4}$ \\
\midrule
\multirow{11}{*}{Causal Meta-Learning}
& Inner learning rate & $10^{-4}$ \\
& Inner temperature & $5\times10^{-4}$ \\
& Global prior std.\ $\sigma$ & $0.05$ \\
& Prior scaling & $10^{4}$ \\
& Model init.\ log std. & $-3$ \\
& Num. inner updates & $4$ \\
& Outer learning rate & $10^{-3}$ \\
& Outer temperature & $5\times10^{-4}$ \\
& $W$ learning rate & $3\times10^{-4}$ \\
& $W$ regularizer $\gamma_W$ & $10^{-1}$ \\
& Adaptation scale $\alpha$ & $0.12$ \\
\bottomrule
\end{tabular}
\end{table}

\clearpage %%% to make clear distinction between subsections

\subsection{UKBB Dataset and Experimental Settings}\label{app:ukbb}
For the experiments in Section~\ref{sec:experiments} we use the UK Biobank (with project permission 77565). Next, we detail the data processing to construct the specific dataset used.

\paragraph{Dataset construction.}
The UK Biobank data includes 220,571 individuals with comprehensive primary care and hospital records. Baseline covariates comprised age, sex, ordinal lifestyle indicators (BMI category, smoking status, alcohol consumption), and continuous blood biochemistry measurements. Lifestyle indicators and continuous blood biochemistry variables were obtained at baseline, corresponding to the year of participant recruitment into UK Biobank. Individuals with missing lifestyle responses were excluded from further analyses, and missing biochemistry values were imputed using the cohort median. Summary statistics of the dataset are presented in Table~\ref{tab:cohort_summary}.

In addition, we constructed a longitudinal data set for the same patient cohort comprising of 304 diagnosis and medication variables, including ICD-10 diagnosis codes and medication codes encoded using the British National Formulary (BNF) and Read coding systems. These variables were aggregated at yearly intervals over the years 1990-2017. Each task was formulated as a binary classification problem, aiming to predict the occurrence of future outcomes after a fixed index year (2011) based on covariates observed prior to the index year. 

\begin{table}[h!]
\centering
\caption{Baseline characteristics of the UK Biobank cohort. Panel A reports demographic and lifestyle characteristics, and Panel B reports blood biochemistry measurements, along with their respective units of measurement. Continuous variables are reported as mean $\pm$ SD. Categorical variables are reported in counts (percentages).}
\label{tab:cohort_summary}
\begin{minipage}{0.48\linewidth}
\centering
\textbf{A: Demographic and lifestyle characteristics}
\begin{tabular}{l r}
\toprule
Participants, $N$ & 220{,}571 \\
Age (years) & 59.4 $\pm$ 8.1 \\
\midrule
\textbf{Sex} & \\
\quad Female & 121{,}370 (55.0\%) \\
\quad Male & 99{,}201 (45.0\%) \\
\midrule
\textbf{BMI category} & \\
\quad Underweight & 1{,}059 (0.5\%) \\
\quad Normal weight & 67{,}814 (30.7\%) \\
\quad Overweight & 94{,}862 (43.0\%) \\
\quad Obese & 56{,}836 (25.8\%) \\
\midrule
\textbf{Smoking status} & \\
\quad Never & 121{,}399 (55.0\%) \\
\quad Previous & 76{,}126 (34.5\%) \\
\quad Current & 23{,}046 (10.4\%) \\
\midrule
\textbf{Alcohol consumption} & \\
\quad Never & 17{,}856 (8.1\%) \\
\quad Low / occasional & 50{,}146 (22.7\%) \\
\quad Moderate / high & 152{,}569 (69.2\%) \\
\bottomrule
\end{tabular}
\end{minipage}
\hfill
\begin{minipage}{0.48\linewidth}
\centering
\textbf{B: Blood biochemistry measurements}
\begin{tabular}{lll}
\toprule
Alanine aminotransferase & (U/L) & 23.4 $\pm$ 13.8 \\
Alkaline phosphatase & (U/L) & 83.5 $\pm$ 25.1 \\
Apolipoprotein A & (g/L) & 1.53 $\pm$ 0.25 \\
Apolipoprotein B & (g/L) & 1.03 $\pm$ 0.23 \\
Aspartate aminotransferase & (U/L) & 26.1 $\pm$ 10.2 \\
C-reactive protein & (mg/L) & 2.52 $\pm$ 4.21 \\
Calcium & (mmol/L) & 2.38 $\pm$ 0.09 \\
Cholesterol & (mmol/L) & 5.70 $\pm$ 1.12 \\
Creatinine & ($\mu$mol/L) & 72.1 $\pm$ 17.9 \\
Cystatin C & (mg/L) & 0.91 $\pm$ 0.17 \\
Gamma glutamyltransferase & (U/L) & 36.7 $\pm$ 39.8 \\
Glucose & (mmol/L) & 5.10 $\pm$ 1.15 \\
HbA1c & (mmol/mol) & 36.1 $\pm$ 6.6 \\
HDL cholesterol & (mmol/L) & 1.44 $\pm$ 0.35 \\
IGF-1 & (nmol/L) & 21.4 $\pm$ 5.5 \\
LDL cholesterol & (mmol/L) & 3.56 $\pm$ 0.85 \\
Phosphate & (mmol/L) & 1.16 $\pm$ 0.15 \\
SHBG & (nmol/L) & 50.5 $\pm$ 25.6 \\
Total protein & (g/L) & 72.5 $\pm$ 3.8 \\
Triglycerides & (mmol/L) & 1.74 $\pm$ 1.00 \\
Urate & ($\mu$mol/L) & 308.2 $\pm$ 78.0 \\
Urea & (mmol/L) & 5.40 $\pm$ 1.35 \\
Vitamin D & (nmol/L) & 48.0 $\pm$ 20.0 \\
\bottomrule
\end{tabular}
\end{minipage}
\end{table}

\paragraph{Task selection.}
We select a group of diseases comprising of cardiovascular diseases and respiratory conditions, which are common in the UK Biobank and exhibit heterogeneous, but related, disease mechanisms. Tasks are split into source and target sets, where source tasks are used for meta-training and target tasks are held out for adaptation, as explained in Appendix~\ref{app:algorithms}. Key details of the selected source and target tasks are summarized in Table~\ref{tab:task_overview}. 
\begin{table}[H]
\centering
\caption{Overview of disease tasks used in the experiments, including ICD-10 codes, disease category (cardiovascular-metabolic = C or respiratory = R), clinical descriptions, task roles, and prevalence in the UK Biobank cohort (N=220,571).} 
\label{tab:task_overview}
\begin{tabular}{lcllrr}
\toprule
\textbf{ICD-10} & \textbf{Category} & \textbf{Disease description} & \textbf{Role} & \textbf{Cases} & \textbf{Prevalence (\%)} \\
\midrule
I21 & C & Acute myocardial infarction & Target & 3{,}941 & 1.79 \\
G45 & C & Transient ischaemic attack (TIA) & Target & 2{,}399 & 1.09 \\
J44 & R & Chronic obstructive pulmonary disease & Target & 5{,}850 & 2.65 \\
J45 & R & Asthma & Target & 4{,}318 & 1.96 \\
\midrule
E11 & C & Type 2 diabetes mellitus & Source & 9{,}571 & 4.34 \\
I10 & C & Essential (primary) hypertension & Source & 32{,}153 & 14.58 \\
I20 & C & Angina pectoris & Source & 5{,}325 & 2.41 \\
I25 & C & Chronic ischaemic heart disease & Source & 11{,}175 & 5.07 \\
I48 & C & Atrial fibrillation and flutter & Source & 9{,}704 & 4.40 \\
I63 & C & Cerebral infarction (ischaemic stroke) & Source & 2{,}898 & 1.31 \\
J18 & R & Pneumonia, organism unspecified & Source & 8{,}385 & 3.80 \\
J40 & R & Bronchitis, not specified acute/chronic & Source & 1{,}023 & 0.46 \\
J43 & R & Emphysema & Source & 1{,}474 & 0.67 \\
\bottomrule
\end{tabular}
\end{table}

\paragraph{Experimental settings.}
Bayesian optimization of the hyperparameters was done with Weights\&Biases, optimizing mean validation AUROC across target tasks. To ensure fair comparison, all methods were given comparable tuning budgets and identical data splits. Model architectures were fixed across methods (number of layers=2, number of hidden layers=32 in both tabular and longitudinal sets). All methods were tuned over 50 sweeps. The final experiments were done with the best hyperparameter configuration evaluated across 10 random seeds. The used ranges for hyperparameters are presented in Table~\ref{tab:hparam_ranges}. 
\begin{table}[H]
\centering
\caption{Hyperparameter tuning ranges. }
\label{tab:hparam_ranges}
\begin{tabular}{lll}
\toprule
\textbf{Method} & \textbf{Hyperparameter} & \textbf{Search Range} \\
\midrule
\multirow{5}{*}{BNN}
& Inner learning rate & log-uniform $[3\!\times\!10^{-5},\,3\!\times\!10^{-3}]$ \\
& Inner temperature & log-uniform $[10^{-4},\,3\!\times\!10^{1}]$ \\
& Global prior std.\ $\sigma$ & log-uniform $[0.02,\,0.2]$ \\
& Prior scaling & log-uniform $[10^{2},\,10^{4}]$ \\
& Model init.\ log std. & $\{-4,\,-3,\,-2\}$ \\
\midrule
\multirow{4}{*}{MAML}
& Inner learning rate & log-uniform $[10^{-4},\,5\!\times\!10^{-2}]$ \\
& Outer learning rate & log-uniform $[10^{-5},\,5\!\times\!10^{-3}]$ \\
& Num. inner steps & $\{1,\,2,\,4,\,8\}$ \\
& Tasks per meta-batch & $\{2,\,4,\,8\}$ \\
\midrule
\multirow{2}{*}{DKT}
& Outer learning rate & log-uniform $[10^{-5},\,10^{-2}]$ \\
& GP optimization steps & $\{25,\,50,\,100,\,200\}$ \\
\midrule
\multirow{7}{*}{Global Prior}
& Inner learning rate & log-uniform $[3\!\times\!10^{-5},\,3\!\times\!10^{-3}]$ \\
& Inner temperature & log-uniform $[10^{-4},\,10^{-2}]$ \\
& Global prior std.\ $\sigma$ & log-uniform $[0.02,\,0.2]$ \\
& Prior scaling & log-uniform $[10^{3},\,3\!\times\!10^{4}]$ \\
& Model init.\ log std. & $\{-4,\,-3,\,-2\}$ \\
& Num. inner updates & $\{2,\,4,\,6\}$ \\
& Outer learning rate & log-uniform $[3\!\times\!10^{-4},\,3\!\times\!10^{-3}]$ \\
\midrule
\multirow{10}{*}{Causal Prior}
& Inner learning rate & log-uniform $[3\!\times\!10^{-5},\,3\!\times\!10^{-3}]$ \\
& Inner temperature & log-uniform $[10^{-4},\,10^{-2}]$ \\
& Global prior std.\ $\sigma$ & log-uniform $[0.02,\,0.2]$ \\
& Prior scaling & log-uniform $[10^{3},\,3\!\times\!10^{4}]$ \\
& Model init.\ log std. & $\{-4,\,-3,\,-2\}$ \\
& Num. inner updates & $\{2,\,4,\,6\}$ \\
& Outer learning rate & log-uniform $[3\!\times\!10^{-4},\,3\!\times\!10^{-3}]$ \\
 & W learning rate &  log-uniform $[10^{-5},\,3\!\times\!10^{-3}]$\\ 
 & W regularizer $\gamma_W$ & log-uniform $[10^{-6},\,10^{-1}]$ \\
 & Adaptation scale $\alpha$&  \{0.1,\,0.2,\,0.3,\,0.5\}\\
\bottomrule
\end{tabular}
\end{table}

\subsection{Other Numerical Results on the UK Biobank Experiment}\label{app:supplementary-ukbb}

In this section, we report additional numerical results for all methods compared in the UKBB experiment, including complementary performance metrics and computational runtime statistics. Table~\ref{tab:ukbb_results_additional} shows the AUPRC and Matthews correlation coefficient (MCC) values across all methods. MCC is computed at a fixed 0.5 decision threshold for all methods. Across all tasks, causal methods consistently outperform no-transfer and standard meta-learning baselines. Notably, DKT consistently yields near-zero MCC, suggesting limited threshold level robustness under extreme class imbalance. 

Table~\ref{tab:ukbb_timing} reports average wall-clock time and peak memory usage on the UK Biobank experiments. The proposed causal meta-learning approach exhibits comparable computational cost to hierarchical Bayesian meta-learning (HBM). Among the methods using expert-inferred embeddings, the MR-based method shows longer runtimes and slightly higher memory usage, consistent with delayed early stopping, indicating slower convergence.  
\begin{table}[H]
\centering
\footnotesize
\caption{Average (SD) AUPRC and MCC on the UK Biobank results (over 10 seeds) on target tasks. CHI2, ICP, and MR correspond to our method with different embeddings. Best in \textbf{bold}.}
\label{tab:ukbb_results_additional}
\setlength{\tabcolsep}{3pt}
\begin{tabular}{lcccccccc}
\toprule
 & \multicolumn{4}{c}{\textbf{AUPRC}} & \multicolumn{4}{c}{\textbf{MCC}} \\
\cmidrule(lr){2-5} \cmidrule(lr){6-9}
\textbf{Method} 
& \textbf{J44} & \textbf{J45} & \textbf{G45} & \textbf{I21}
& \textbf{J44} & \textbf{J45} & \textbf{G45} & \textbf{I21} \\
\midrule
CHI2, exp     & 0.074 (.013) & 0.029 (.002) & 0.017 (.001) & 0.039 (.004) & 0.122 (.020) & 0.030 (.007) & 0.040 (.005) & 0.075 (.008) \\
ICP, exp.     & 0.119 (.009) & 0.036 (.001) & 0.019 (.001) & 0.041 (.002) & \textbf{0.176 (.005)} 
& \textbf{0.055 (.004)} & 0.047 (.004) & 0.076 (.004) \\
MR, exp.      & 0.107 (.006) & \textbf{0.038 (.002)} & 0.020 (.001) & 0.039 (.002) & 0.164 (.009) 
& \textbf{0.055 (.004)} 
& 0.046 (.003) 
& 0.070 (.005) \\
\midrule
CHI2, oracle &\textbf{ 0.126 (.009)} & 0.032 (.001) & 0.020 (.001) & 0.043 (.003) & 0.174 (.013) & 0.044 (.005) & \textbf{0.053 (.003)} & 0.082 (.004) \\
ICP, oracle  & 0.120 (.007) & 0.036 (.002) &\textbf{ 0.021 (.001)} & \textbf{0.045 (.003)} & 0.172 (.007) & 0.054 (.003) & 0.052 (.004) &\textbf{ 0.085 (.005)} \\
MR, oracle   & 0.085 (.005) & 0.034 (.002) & 0.019 (.001) & 0.044 (.002) & 0.141 (.008) & 0.046 (.003) & 0.048 (.004) & \textbf{0.085 (.004)} \\
\midrule
No transfer & 0.099 (.009) & 0.031 (.002) & 0.018 (.001) & 0.040 (.002) & 0.155 (.007) & 0.037 (.004) & 0.045 (.004) & 0.084 (.004) \\
\midrule
MAML  & 0.099 (.004) & 0.036 (.002) & 0.020 (.001) & 0.042 (.003) & 0.151 (.005) & 0.051 (.003) & 0.051 (.005) & 0.079 (.002) \\
DKT   & 0.117 (.009) & 0.034 (.002) & 0.019 (.001) & 0.038 (.003) & 0.032 (.013) & 0.000 (.000) & 0.004 (.003) & 0.008 (.008) \\
HBM   & 0.102 (.006) & 0.035 (.002) & \textbf{0.021 (.001)} & 0.043 (.003) & 0.151 (.006) & 0.048 (.003) & 0.052 (.003) & 0.080 (.003) \\
\bottomrule
\end{tabular}
\end{table}

\begin{table}[H]
\centering
\caption{Computational cost of different methods on the UK Biobank experiments.
Reported values correspond to average wall-clock time and peak memory usage per task.}
\label{tab:ukbb_timing}
\begin{tabular}{lcc}
\toprule
\textbf{Method} & \textbf{Elapsed Time} & \textbf{Peak Memory} \\
 & \textbf{(hh:mm:ss)} & \textbf{(GiB)} \\
\midrule
CHI2, exp & 00:13:09 & 7.5 \\
ICP, exp & 00:12:13 & 7.1 \\
MR, exp & 00:20:39 & 7.7 \\
\midrule
CHI2, oracle & 00:08:11 & 7.1 \\
ICP, oracle & 00:08:21 & 7.0 \\
MR, oracle & 00:09:47 & 7.1 \\
\midrule
No transfer & 00:02:06 & 6.5 \\
\midrule
MAML & 00:03:33 & 6.5 \\
DKT & 00:03:11 & 6.6 \\
HBM & 00:13:27 & 7.1 \\
\bottomrule
\end{tabular}
\end{table}
% \clearpage

\section{Details for the Construction of the Task Embeddings}\label{sec:causal_methods}
In this section, we describe the causal and correlational methods used to obtain the task embeddings and analyze their differences. These tasks embeddings are employed in the UK Biobank experiments. We note that the causal inference methods we use employ the longitudinal structure of the dataset. To construct task embeddings we use those disease endpoints $s$ that correspond to other tasks $t$. %%% removing 'are implemented using the scripts from \citet{wharrie_bayesian_2024}, which were originally', and 'and are directly applicable to the UK Biobank setting', to avoid issues with self-citing. This can be modified for the final version and the arxiv version --- JL

\subsection{Causal and Correlational Methods used for Task Embeddings}
\paragraph{Mendelian randomisation.} 
We estimate causal effects between disease endpoints using two-sample MR implemented using the \texttt{TwoSampleMR} R package \citep{mrsteiger, twosamplemr}. We use genome-wide association studies (GWAS) summary statistics from the UK Biobank analyses released by  \citet{nealelab_ukbb}. Genetic instruments for each endpoint are derived from GWAS summary statistics  by filtering variants (MAF $>$ 0.01, $p<10^{-5}$) and applying LD clumping (window 10,000kb, $r^2<0.01$) using an EUR reference panel and PLINK. For each task $t$, we run MR across exposure endpoints and use the inverse-variance weighted estimate $\beta^{\mathrm{IVW}}$ as the causal effect size. The resulting effect estimates $\beta_{s,t}^{\mathrm{IVW}}$ of the exposures $s=1,\ldots, K$, on task $t$ are collected into the embedding $z_t=[\beta_{1,t}^{\mathrm{IVW}},\ldots, \beta_{K,t}^{\mathrm{IVW}}]$.

\paragraph{Invariant causal prediction.}
For each task $t$, we construct a binary outcome indicating whether $t$ occurs in the outcome period and use counts of other tasks observed during exposure years as candidate predictors. We apply invariant causal prediction \citep[ICP; ][]{peters2016causal} using the \texttt{InvariantCausalPrediction} R package \citep{ICPrPackage} with a random split of the cohort into two environments and a significance level of $\alpha=0.1$, with boosting-based variable preselection (\texttt{maxNoVariables}=10, and  \texttt{maxNoVariablesSimult}=5). For each task $t$ (endpoint), ICP returns coefficient estimates for the predictors $s=1,\ldots,K$, whose association with the outcome remains invariant across environments. We form a task embedding $z_t=[\beta_{1,t},\ldots, \beta_{K,t}]$ by averaging coefficient estimates $\beta$ across accepted sets and setting non-selected coefficients to zero. We use the standard ICP formulation, which assumes no hidden confounders. 

\paragraph{Chi-square ($\chi^2$) temporal association baseline.}
As a correlational baseline, we construct task embeddings using chi-square ($\chi^2$) association statistics computed from longitudinal disease co-occurrence data. For each task $t$, we perform a chi-square test between the occurrence of $t$ in an outcome period after the index year and the occurrence of each endpoint $s=1,\ldots,K$ in the longitudinal data during the exposure period before the index year. The resulting $\chi^2$ statistics are assembled into a vector representation for each task $z_t=[\chi^2_{1,t},\ldots, \chi^2_{K,t}]$. We compute chi-square statistics using the scikit-learn package \citep{scikit-learn}.

\subsection{Comparison of the Embeddings}\label{app:embedding-analysis}
In this section, we compare the task embeddings produced by the different methods considered, and examine how they differ in their resulting task similarity structures. 

Figure \ref{fig:pca-embeddings} shows the task embeddings obtained from different causal inference methods, projected onto their first two principal components (PC1 and PC2) showing the differences in latent embedding space geometry across methods. ICP-derived embeddings exhibit moderately structured geometry with tasks spread along two clinically meaningful axes with cardiovascular-metabolic diseases clustering together and respiratory diseases together. For MR, most of the variance is explained by PC1 (70.9\%), making most of the tasks highly concentrated in the embedding space with a few outliers.  $\chi^2$-association embeddings have much larger scale (three orders of magnitude), and tasks are widely spread in the embedding space. However, there is a meaningful clustering according to the disease groups. Figure~\ref{fig:knn_graphs} shows the three-nearest neighbor graphs for the causal task embeddings. ICP and CHI2 exhibit similar connective structure with diseases within a same group connected together. MR shows clearly different structure that does not follow the same clinical grouping. 
\begin{figure}[!h]
    \centering
    \includegraphics[width=1.0\linewidth]{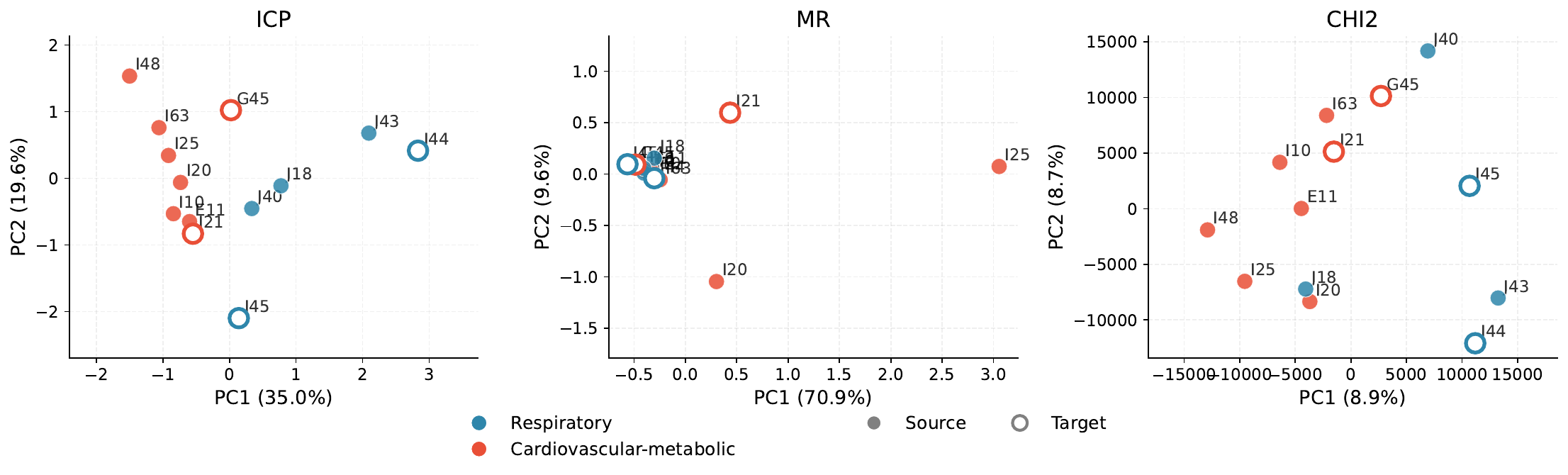}
    \caption{Principal component projections of causal task embeddings for Invariant Causal Prediction (ICP), Mendelian Randomisation (MR) and $\chi^2$ (CHI2). Colors indicate disease category (cardiovascular-metabolic vs. respiratory). Hollow markers indicate target tasks.}
    \label{fig:pca-embeddings}
\end{figure}

\begin{figure}[H]
    \centering
    \includegraphics[width=1.0\linewidth]{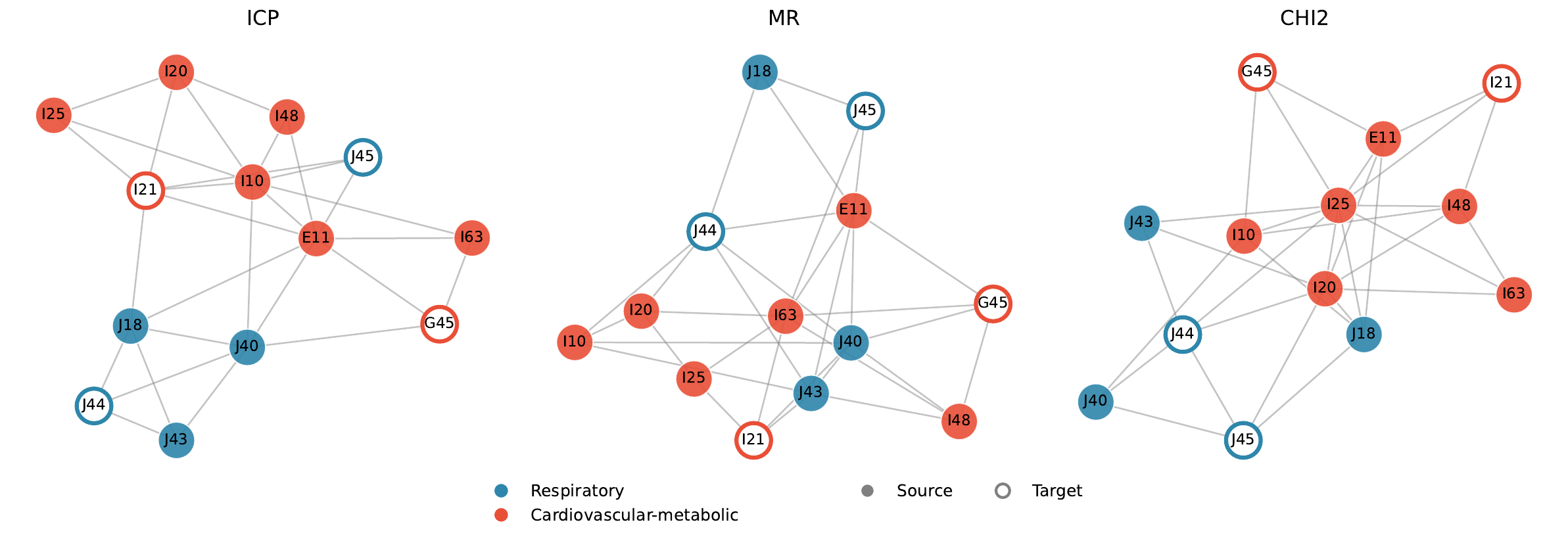}
    \caption{Three-nearest neighbor graphs of causal task embeddings. Nodes represent disease prediction tasks and edges connect tasks with similar causal structures. Colors indicate disease category (cardiovascular-metabolic vs. respiratory). Hollow markers indicate target tasks.}
    \label{fig:knn_graphs}
\end{figure}
The difference of the MR-derived embeddings, with respect to the other two methods, potentially reflects the heterogeneity in strengths of the genetic instruments across the diseases, which is a known challenge in MR \citep{burgess2011avoiding}. Diseases with well-powered GWAS (large sample size and strong genetic association) produce more stable causal estimates, while weaker instruments produce noisier estimated embeddings, which may not reflect the true causal structure.

Table~\ref{tab:ukbb_ood_distances} shows the OOD distances for each target task from to the average task ($\bar{z}$) and to the nearest source task. For the ICP method, J44 is the most OOD task with both metrics, and I21 is the least OOD in both metrics as well. MR ranks I21 (myocardial infarction, commonly known as heart attacks) as the most OOD task, which is clinically counterintuitive, since several tasks in the source are known risk factors, for instance type 2 diabetes (E11), and hypertension (I10). This reinforces that MR embeddings is capturing something that does not correspond to clinical similarity, and instead reflects the instrument strength. For CHI2, J45 is the most OOD task, but the distances are very similar throughout. 

\begin{table}[H]
\centering
\caption{Characterization of the target tasks in the embedding space, reported as the Euclidean distance and rank (in parenthesis, $1$ indicates the most OOD, $4$ is the least) to average source task ($\bar{z}$) and to the nearest source task. CHI2 distances are reported in units of $\times 10^3$. Most OOD task for each metric and method in \textbf{bold.}}
\label{tab:ukbb_ood_distances}
\setlength{\tabcolsep}{3pt}
\begin{tabular}{lcc cc cc}
\toprule
& \multicolumn{2}{c}{\textbf{ICP}} & \multicolumn{2}{c}{\textbf{MR}} & \multicolumn{2}{c}{\textbf{CHI2}} \\
\cmidrule(lr){2-3}\cmidrule(lr){4-5}\cmidrule(lr){6-7}
\textbf{Target}
& Avg. source & Nearest source
& Avg. source & Nearest source
& Avg. source & Nearest source \\
\midrule
J44 & \textbf{3.39 (1)} & \textbf{2.29 (1)} & 0.69 (4) & 0.59 (3) & 29.19 (4) & 38.63 (4) \\
J45 & 2.61 (2) & 2.10 (3) & 0.94 (2) & 0.72 (2) & \textbf{29.54 (1)} & \textbf{39.39 (1)} \\
G45 & 1.93 (3) & 2.15 (2) & 0.79 (3) & 0.50 (4) & 29.46 (2) & 39.24 (2) \\
I21 & 1.42 (4) & 1.38 (4) & \textbf{1.02 (1)} & \textbf{1.23 (1)} & 29.34 (3) & 39.12 (3) \\
\bottomrule
\end{tabular}
\end{table}

%%%%%%%%%%%%%%%%%%%%%%%%%%%%%%%%%%%%%%%%%%%%%%%%%%%%%%%%%%%%%%%%%%%%%%%%%%%%%%%
%%%%%%%%%%%%%%%%%%%%%%%%%%%%%%%%%%%%%%%%%%%%%%%%%%%%%%%%%%%%%%%%%%%%%%%%%%%%%%%

\end{document}